\documentclass{article}
\usepackage{arxiv}
\usepackage[authoryear,round]{natbib}
\setcitestyle{authoryear,round,citesep={;},aysep={,},yysep={;}}
\renewcommand{\shorttitle}{SCOPE: Sparse PDE Inference}
\renewcommand{\headeright}{Preprint}
\date{}

\usepackage{amsmath,amssymb,amsthm,mathtools,booktabs,array}
\usepackage{multirow}
\usepackage{hyperref}
\usepackage{url,xurl,longtable} 
\usepackage{graphicx}
\usepackage{pifont} 
\usepackage{xcolor}
\usepackage{colortbl}
\definecolor{ScopeInk}{HTML}{214A55}
\definecolor{ScopeTint}{HTML}{EAF2F3}
\definecolor{ScopeGray}{HTML}{F3F4F5}
\newcommand{\scopepanel}[2]{\rowcolor{ScopeGray}\multicolumn{#1}{l}{\textcolor{ScopeInk}{\textbf{#2}}}\\}
\newcommand{\tabnote}[1]{\par\vspace{3pt}\begin{minipage}{\linewidth}\footnotesize #1\end{minipage}}
\newenvironment{reviewtext}{\begingroup\color{red}}{\endgroup}

\newcommand{\E}{\mathbb E}
\newcommand{\R}{\mathbb R}
\newcommand{\G}{\mathcal G}
\newcommand{\C}{\mathcal C}

\newcommand{\J}{\mathcal J}

\newcommand{\RX}[1]{}
\newcolumntype{L}[1]{>{\raggedright\arraybackslash}p{#1}}

\newtheorem{theorem}{Theorem}
\newtheorem{proposition}[theorem]{Proposition}

\theoremstyle{definition}
\newtheorem{definition}{Definition}

\theoremstyle{remark}

\newcommand{\figuretbd}[1]{%
    \begin{center}
        \fbox{%
            \parbox[c][1.25cm][c]{.93\linewidth}{%
                \centering\large TBD
            }%
        }
    \end{center}
}

\title{SCOPE: Observation-Conditioned Full-Target Prediction for Sparse PDE Inference}

\hypersetup{
    colorlinks=false,
    pdfborder={0 0 1},
    pdfborderstyle={},
    pdftitle={SCOPE: Observation-Conditioned Full-Target Prediction for Sparse PDE Inference},
    pdfauthor={}
}

\newcommand{\scopepublicauthors}{%
\begin{minipage}{\dimexpr\textwidth-2\tabcolsep\relax}
\centering\normalfont
{\large\bfseries
Ruichen Xu\textsuperscript{1,\ensuremath{\dagger}},\quad Siyao Wang\textsuperscript{2},\quad Fang Wan\textsuperscript{1},\quad Jiacheng Qiu\textsuperscript{1},\\[4pt]
Wenhan Gao\textsuperscript{1},\quad Jiaxing Zhang\textsuperscript{3},\quad Linsey Pang\textsuperscript{4,6},\quad Ravid Shwartz-Ziv\textsuperscript{5},\\[4pt]
Prakhar Mehrotra\textsuperscript{4},\quad Yann LeCun\textsuperscript{5},\quad Yuefan Deng\textsuperscript{1,\ensuremath{\dagger}}\par}
\vspace{8pt}
{\small
\textsuperscript{1}Stony Brook University\quad \textsuperscript{2}University of California, Davis\\[3pt]
\textsuperscript{3}Independent Research\quad \textsuperscript{4}PayPal\\[3pt]
\textsuperscript{5}New York University\quad \textsuperscript{6}Northeastern University\par}
\vspace{6pt}
{\small Code: \url{https://github.com/ru1ch3n/SCOPE}}
\end{minipage}}
\author{\scopepublicauthors}
\makeatletter
\g@addto@macro\@thanks{%
  \footnotetext[2]{Corresponding authors: Ruichen Xu
  (\href{mailto:ruichen.xu@stonybrook.edu}{\nolinkurl{ruichen.xu@stonybrook.edu}})
  and Yuefan Deng
  (\href{mailto:yuefan.deng@stonybrook.edu}{\nolinkurl{yuefan.deng@stonybrook.edu}}).}%
}
\makeatother
\hypersetup{pdfauthor={Ruichen Xu; Siyao Wang; Fang Wan; Jiacheng Qiu; Wenhan Gao; Jiaxing Zhang; Linsey Pang; Ravid Shwartz-Ziv; Prakhar Mehrotra; Yann LeCun; Yuefan Deng}}

\begin{document}
\maketitle

\begin{abstract}
Recovering complete physical fields from sparse observations is challenging because the measurements may not uniquely determine the underlying state. Diffusion-based PDE solvers address this problem through iterative sampling whereas neural operators provide deterministic one-pass predictions. We propose \textbf{SCOPE} (\textbf{\underline{S}parse-\underline{C}ontext \underline{O}bservability-aware \underline{P}redictive \underline{E}mbeddings}) to recover complete PDE fields from sparse observations by coupling full-field latent prediction with physical reconstruction. A shared decoder reconstructs fields from both predicted and complete-view representations so that representation learning is guided by both physical recovery and latent matching. We derive a quadratic risk decomposition at fixed teacher--decoder pairs showing why optimal latent prediction need not yield optimal field reconstruction. We also establish sufficient conditions for decoder improvements on complete inputs to transfer to recovery from partial observations. Experiments across five PDE settings show that SCOPE outperforms mask-aware neural operators on all ten forward and inverse tasks and achieves lower errors than those reported for diffusion-based solvers including DiffusionPDE and FunDPS. Decoder-only adaptation further improves recovery without retraining the backbone while retaining deterministic single-pass inference.
\end{abstract}
\section{Introduction}
\label{sec:introduction}

\begin{figure}[t]
\centering
\includegraphics[width=\linewidth]{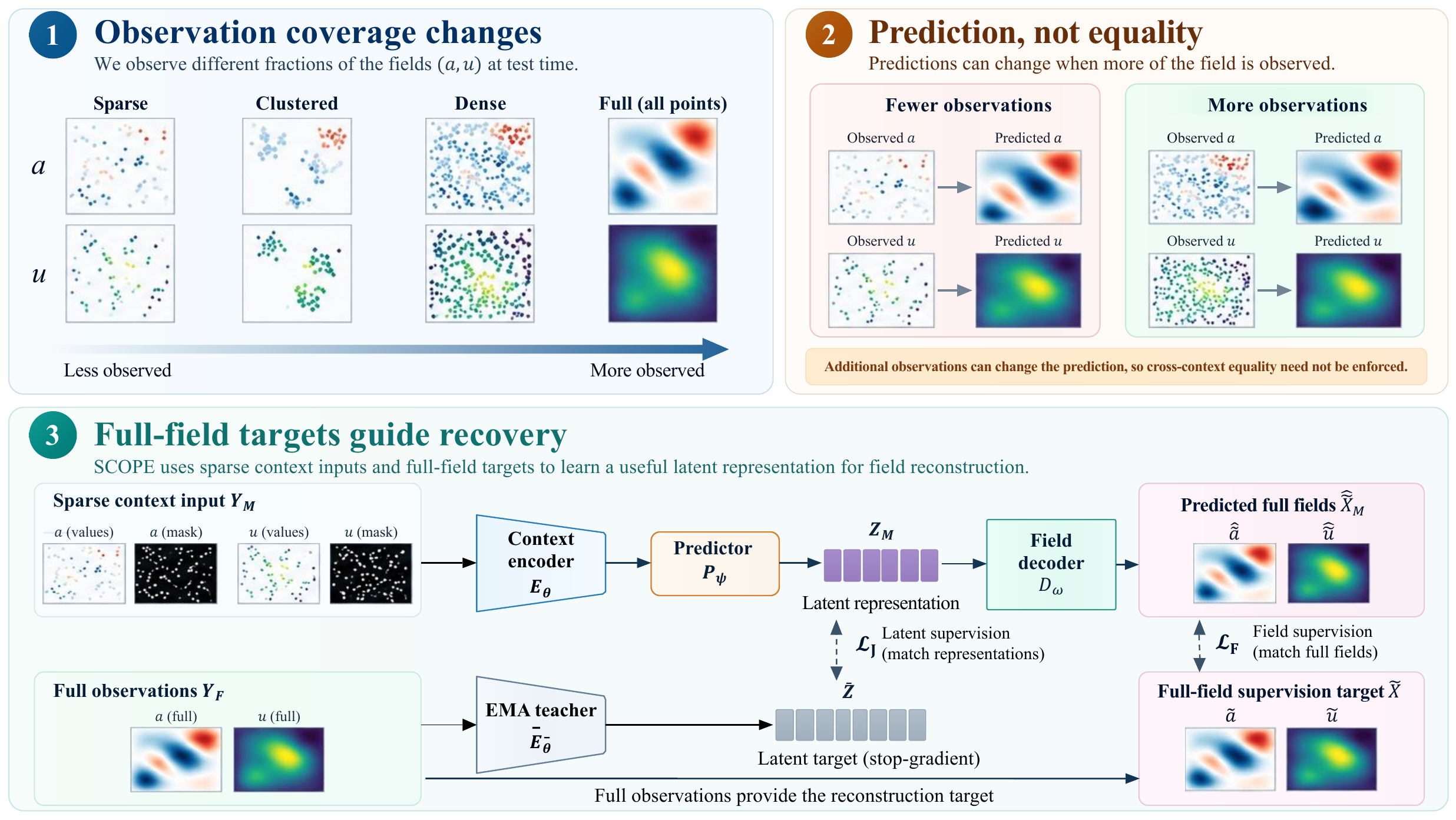}
\caption{\textbf{Overview of SCOPE.} \textbf{(1) Observation coverage varies across contexts:} different densities and layouts provide alternative observations of the same physical realization. \textbf{(2) Prediction, not equality:} predictions may change with additional evidence despite a common complete-field target. \textbf{(3) Full-field targets guide recovery:} complete training pairs separately provide teacher representations for latent prediction $\mathcal L_J$ and physical fields for reconstruction $\mathcal L_F$. Joint recovery is illustrated. Figure~\ref{fig:architecture} details the training pathways; Section~\ref{sec:scope_objective} describes decoder adaptation. Field maps are schematic, not experimental results.}
\label{fig:scope_overview}
\end{figure}

Partial differential equations (PDEs) describe physical systems across science and engineering~\citep{evans2010partial,quarteroni2009numerical,karniadakis2021physics}. Forward problems predict responses from physical inputs, while inverse problems recover unknown inputs from observed responses~\citep{kaipio2005statistical,stuart2010inverse}. Neural operators learn surrogate mappings between function spaces, producing field predictions through a network evaluation after training~\citep{li2021fno,lu2021deeponet,kovachki2023neural}. With incomplete measurements, however, predictors must use partial observations of an input field, a response field, or both~\citep{asch2016data,reichle2008data}. This creates two distinct challenges: \ding{182}~\textbf{Incomplete information}, because the evidence may not uniquely determine the fields; and \ding{183}~\textbf{Changing observation coverage}, because measurement densities and layouts vary. We study paired-field recovery with one model per PDE setting and complete paired fields for training supervision.

Diffusion-based methods condition field generation on partial observations~\citep{huang2024diffusionpde,yao2025fundps}. Predictive representation learning offers a complementary context--target approach~\citep{assran2023ijepa}, including prediction of contextualized full-input representations from partial inputs~\citep{baevski2022data2vec}. For physical recovery, the question is how to connect these representation targets to the observations and reconstructed fields. \textbf{Spatial coverage does not generally identify latent recoverability}: measurements may cover only part of a token's footprint, while spatially mixed features may depend on values elsewhere. We therefore use physical masks to condition prediction, not as recoverability labels for selecting latent targets. This motivates \textbf{a common full-target objective across observation contexts}, coupled to physical reconstruction, without assuming that every latent feature is exactly recoverable from every context.

We propose \textbf{SCOPE}: \underline{S}parse-\underline{C}ontext \underline{O}bservability-aware \underline{P}redictive \underline{E}mbeddings. Here, \emph{observability-aware} denotes conditioning on measurements and their availability, not estimating an observable latent subspace. A mask-conditioned student predicts the complete representation of an exponential-moving-average teacher with uniform relative coordinate weights. A shared nonlinear decoder reconstructs the paired fields from both the student's prediction and the online encoder's full-view representation, training both pathways for physical reconstruction. To refine the readout without retraining the representation, we freeze the encoder, predictor, and conditioner and fit a replacement decoder to their sparse-context representations. This \emph{sparse-path decoder-only adaptation} separates decoder optimization from backbone training and uses complete training targets offline, without optimizing on test examples. Both native and adapted models retain deterministic, single-pass inference from the supplied context.

Our contributions are threefold. \textbf{(1)} We develop a field-grounded predictive framework combining observation-conditioned full-target regression, shared physical reconstruction, and sparse-path decoder adaptation. \textbf{(2)} We formalize latent recoverability and analyze recovery at fixed learned representations. A quadratic reference separates observation uncertainty, decoder incompatibility, and predictor excess risk; a separate physical-error bound gives sufficient conditions for complete-to-sparse decoder transfer (Section~\ref{sec:lossaware} and Appendices~\ref{app:scope_recoverability}--\ref{app:scope_nonlinear}). \textbf{(3)} We evaluate sparse forward and inverse recovery across elliptic and flow problems, separating backbone-recipe comparisons from adaptation at a fixed Full backbone. The experiments demonstrate competitive results across the evaluated tasks; training settings, observation protocols, and decoder-adaptation costs are detailed in the appendix.

\section{Problem Setting and Related Work}
\label{sec:setting}

\subsection{PDE Inference under Partial Observations}
\label{sec:partial_observation_setting}

\emph{PDE-based inverse problems} recover unknown coefficients, sources, or initial states from response measurements~\citep{kaipio2005statistical,stuart2010inverse}. Classically, $y=\mathcal O(\mathcal F(a))+\varepsilon$, where $\mathcal F:\mathcal A\rightarrow\mathcal U$ is the PDE solution operator, $\mathcal O$ the measurement process, and $\varepsilon$ possible measurement noise. Partial measurements may admit multiple physical states, making recovery depend on both the observations and the field distribution~\citep{stuart2010inverse,asch2016data}. We consider conditional paired-field recovery from partial observations of either channel or both.

Let $X=(a,u)\in\mathbb R^{2N}$ denote paired fields on a fixed grid $\Omega_h$, with \textbf{one independently trained model per PDE setting} handling its different observation tasks. The channels represent source/solution, permeability/pressure, or scalar flow snapshots, as specified with the grid in Appendix~\ref{app:data}. Flow channels are not velocity components; the setting concerns paired-field recovery rather than multi-PDE pretraining or autoregressive rollout.

Write $\widetilde X=\mathcal N(X)=(\widetilde a,\widetilde u)$ for the pair under the fixed released normalization. For masks $M=(M_a,M_u)$, the input is $Y_M=(M_a\odot\widetilde a,M_a,M_u\odot\widetilde u,M_u)$, where $M_a,M_u\in\{0,1\}^N$ and $\odot$ denotes elementwise multiplication. One means \emph{observed}; missing normalized entries are zero-filled, with explicit masks distinguishing them from measured zeros. The implemented observations are selected directly from the released fields using the specified masks. Complete training pairs supply latent and physical targets; inference receives only $Y_M$.

\textbf{Forward recovery} observes $a$ and evaluates $u$; \textbf{inverse recovery} observes $u$ and evaluates $a$; \textbf{joint recovery} observes both and evaluates each. The model reconstructs both channels. These names specify recovery direction, not unique identifiability. Cylinder flow additionally supplies released disk values in both channels (Appendix~\ref{app:boundary}). Complete visibility of the active channel does not expose the hidden channel beyond this conditioning; observing the complete pair makes $\widetilde X$ available.

Tasks, budgets, and layouts vary while the complete-field target space stays fixed. \textbf{Uniform points, regular grids, random lines, clusters, and blocks all belong to backbone training}. Their construction and batch schedule are detailed in Appendix~\ref{app:masks}. Held-out fields and fresh masks evaluate the reported observation conditions; the distinction between backbone training and adapted-model evaluation is specified in Appendix~\ref{app:full-results}.

\subsection{Related Work}
\label{sec:related_work}

Neural operators learn function-space mappings for PDE inference~\citep{kovachki2023neural}, including FNO~\citep{li2021fno}, DeepONet~\citep{lu2021deeponet}, CNO~\citep{raonic2023cno}, and attention-based GNOT and Transolver~\citep{hao2023gnot,wu2024transolver}. Their treatment of spatial interactions and discretization is distinct from the observation process: changing numerical representations need not remove measurements, whereas partial observations change the evidence for prediction~\citep{gao2025crop}. Sparse-context training also differs from zero-filling only at evaluation. LANO uses mask-to-predict learning and latent autoregressive reconstruction with incomplete training observations~\citep{hou2026lano}; SCOPE forms partial contexts from complete paired training fields.

Generative methods combine learned field distributions with measurements during inference. DiffusionPDE conditions field generation on partial observations~\citep{huang2024diffusionpde}, while FunDPS uses guided diffusion sampling in function spaces~\citep{yao2025fundps}. PISD, PRISMA, EquiReg, and DDIS study spectral representations, physics-informed attention, equivariance regularization, and decoupled prior-and-operator guidance, respectively~\citep{gallon2026pisd,sawhney2025prisma,tolooshams2025equireg,lin2026ddis}. SCOPE instead learns a \textbf{deterministic observation-to-field predictor}, using complete pairs for latent and physical supervision. Its output is a point estimate, not a posterior sample or calibrated uncertainty estimate.

I-JEPA predicts target-block representations from an observed context~\citep{assran2023ijepa}, while data2vec predicts contextualized full-input representations from masked inputs~\citep{baevski2022data2vec}; masked autoencoding has also been studied for PDE representations~\citep{zhou2024masked}. SCOPE couples observation-conditioned prediction to shared physical reconstruction, rather than introducing complete-input targets or teacher--student asymmetry alone. Its complete-field latent targets have mask-independent coordinate support and uniform relative weights; the overall auxiliary factor $\alpha_t$ follows the gradient-balancing rule in Section~\ref{sec:scope_objective}. Its variance safeguard is inspired by VICReg's variance regularization~\citep{bardes2022vicreg}. Complementing joint-embedding and reconstruction analyses centered on prediction spaces and view generation~\citep{vanassel2025joint}, our analysis concerns conditional recovery and physical readout at fixed representations.

Martingale-consistent self-supervision studies conditional coherence under information refinement~\citep{gogl2026martingale}, while common-information learning studies information shared across views~\citep{kleinman2023gkvae}. These cross-view relations differ from selecting supervised coordinates within one context. SCOPE distinguishes common targets from identical predictions without adding a martingale or cross-view equality penalty. Section~\ref{sec:pitfalls} and Appendix~\ref{app:scope_information} formalize this distinction; Section~\ref{sec:lossaware} connects conditional prediction to physical readout.

\subsection{Physical Coverage and Latent Recoverability}
\label{sec:pitfalls}

Fix a deterministic teacher $T$ and a joint field/observation law, with $\widetilde X\in L^2(\mathbb R^{2N})$ and $Z=T(\widetilde X)\in L^2(\mathbb R^r)$. For $Y_M$ from Section~\ref{sec:partial_observation_setting}, let $\mathcal G_M=\sigma(Y_M)$ denote the completed observation sigma-field, $\mu_{\mathcal G_M}=\mathbb E[Z\mid\mathcal G_M]$, and $C_{\mathcal G_M}=\operatorname{Cov}(Z\mid\mathcal G_M)$. These are population quantities for the fixed target and observation law, not a trained student's errors.

A direction $v\in\mathbb R^r$ is \emph{exactly recoverable} if $v^\top Z=g(Y_M)$ almost surely for a measurable $g$. Its minimum squared prediction error is $v^\top\mathbb E[C_{\mathcal G_M}]v$, so exact recovery holds precisely when this vanishes. Positive uncertainty still permits informative approximation. Such a statistic may combine coordinates across tokens; spatial position alone does not certify recoverability (Appendix~\ref{app:scope_recoverability}).

Size-weighted means over nonempty groups partitioning the latent coordinates reproduce the full-target loss. Other normalized group weights yield $\ell_\Lambda=r^{-1}(\hat z-Z)^\top\Lambda_M(\hat z-Z)$, where $\Lambda_M$ is a mask-dependent nonnegative diagonal metric with $\operatorname{tr}(\Lambda_M)=r$. Appendix~\ref{app:scope_recoverability} gives the group-to-metric identity and distinguishes supervision groups from zero conditional uncertainty. SCOPE uses $\Lambda_M=I_r$: the target coordinates receive equal relative weight without requiring identical predictions or exact recovery of every feature. Merely renaming groups does not change the objective.

For nested observation sigma-fields, write $\mu_i=\mathbb E[Z\mid\mathcal G_i]$. The tower property gives $\mathbb E[\mu_\ell\mid\mathcal G_k]=\mu_k$ when $\mathcal G_k\subseteq\mathcal G_\ell$~\citep{kallenberg2021foundations}, allowing predictions to change with evidence. Complete invariance instead restricts predictions to information common to the views. Definition~\ref{def:consistency} and the corresponding excess-risk identity appear in Appendix~\ref{app:scope_information}; these cross-context relations are distinct from within-target grouping.

Squared field reconstruction has conditional target $\nu_{\mathcal G_M}=\mathbb E[\widetilde X\mid\mathcal G_M]$. Even when $D(Z)=\widetilde X$ almost surely, nonlinear decoding need not satisfy $D(\mu_{\mathcal G_M})=\nu_{\mathcal G_M}$. These squared-loss identities do not identify the optimum of the unsquared physical relative error used in training. Section~\ref{sec:lossaware} distinguishes the quadratic reference from its physical-error readout-transfer bounds.

\section{SCOPE: Field-Grounded Full-Target Prediction}
\label{sec:method}

SCOPE couples prediction of a complete-field representation with physical reconstruction from partial observations. Training comprises complete-view initialization, joint learning over mixed observation contexts, and offline decoder adaptation with a frozen backbone. Figure~\ref{fig:architecture} illustrates the training and inference pathways; numerical settings are specified in Appendix~\ref{app:implementation}.

\begin{figure}[t]
\centering
\includegraphics[width=\linewidth]{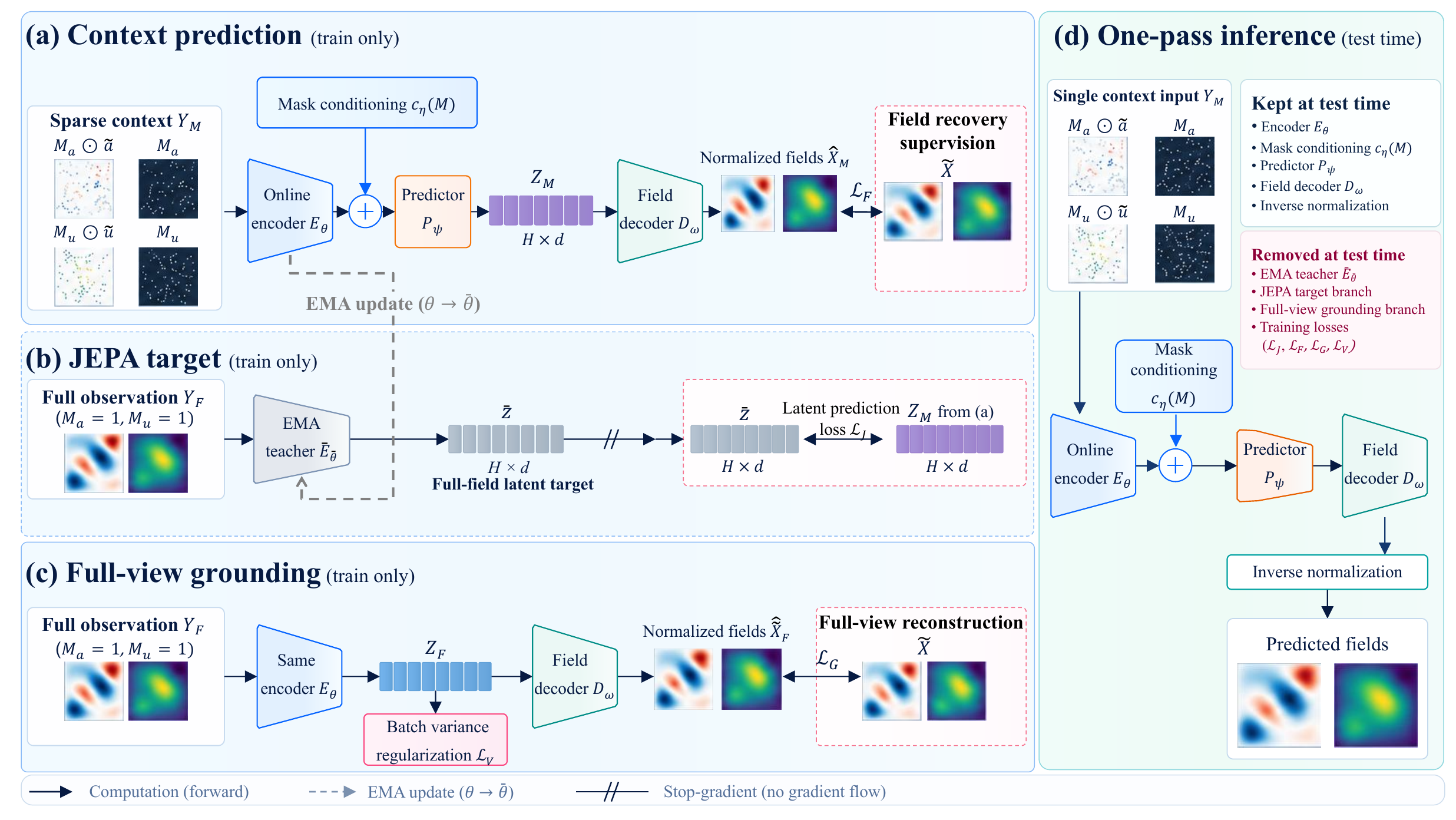}
\caption{\textbf{SCOPE training and inference.} \textbf{(a) Context prediction:} the observation-conditioned pathway reconstructs the paired fields. \textbf{(b) JEPA target:} the EMA teacher supplies $\bar z$ for comparison with $z_M$ through $\mathcal L_J$; its parameters follow the online encoder. \textbf{(c) Full-view grounding:} the shared online encoder and decoder support $\mathcal L_G$, while $z_F$ receives variance regularization. \textbf{(d) One-pass inference:} only the context pathway and inverse normalization are used. Physical field losses include inverse normalization. SP replaces $D_\omega$ with $D_{\omega'}$ before inference. Field maps are schematic.}
\label{fig:architecture}
\end{figure}

\subsection{Observation-Conditioned Full-Target Prediction}
\label{sec:fulltarget}

Let $E_\theta$, $P_\psi$, and $D_\omega$ denote the online encoder, latent predictor, and nonlinear field decoder. The encoder processes the observation context $Y_M$ from Section~\ref{sec:partial_observation_setting} as spatial tokens. A learned conditioner $c_\eta(M)$ embeds channel-wise patch visibility fractions into the same latent space. The context pathway computes $z_M=P_\psi(E_\theta(Y_M)+c_\eta(M))$ and $\widehat{\widetilde X}_M=D_\omega(z_M)$. Thus exact masks enter the encoder, while local availability summaries enter the predictor directly. The encoder and predictor use Transformer blocks; the decoder maps tokens to field tiles followed by residual convolutional refinement. The context predictor and teacher output the same latent coordinate space, allowing their representations to be compared without selecting mask-dependent target groups. Module dimensions are given in Appendix~\ref{app:architecture}.

Following context--target predictive learning~\citep{assran2023ijepa,baevski2022data2vec}, an EMA encoder receives $Y_F=(\widetilde a,\mathbf1,\widetilde u,\mathbf1)$ and produces $\bar z=\operatorname{sg}(\bar E_{\bar\theta}(Y_F))$, where $\operatorname{sg}$ stops gradients. After each online update, $\bar\theta\leftarrow\tau_t\bar\theta+(1-\tau_t)\theta$. A separate online pass computes $z_F=E_\theta(Y_F)$ and $\widehat{\widetilde X}_F=D_\omega(z_F)$. This full-view branch shares the encoder and decoder with context recovery but bypasses the predictor and conditioner.

Write $r$ for the flattened latent dimension. Empty patches remain present, and \textbf{every context predicts the entire teacher representation}. Masks condition the prediction without selecting supervised coordinates or changing their relative weights. This common target does not impose identical predictions across contexts or assume exact recovery of every feature.

Native inference returns $\widehat X_M=\mathcal N^{-1}(D_\omega(z_M))$ through the encoder, conditioner, predictor, and decoder. Within each PDE setting, the same backbone-stage checkpoint supports forward, inverse, and joint recovery. No teacher, auxiliary full-view pass, sampling, test-time optimization, or measured-value overwrite is used.

\subsection{Joint Prediction and Physical Grounding}
\label{sec:scope_objective}

Let $\langle\cdot\rangle_B$ denote a batch average and $d_\omega=\mathcal N^{-1}\circ D_\omega$ physical decoding. For nonzero target-channel norms, define $\ell_F(\widehat X,X)=\frac12\sum_{c\in\{a,u\}}\|\widehat X_c-X_c\|_2/\|X_c\|_2$. Context reconstruction, latent prediction, and full-view grounding use
\[
\begin{aligned}
\mathcal L_F&=\left\langle\ell_F(d_\omega(z_M),X)\right\rangle_B,\qquad
\mathcal L_J=\left\langle r^{-1}\|z_M-\bar z\|_2^2\right\rangle_B,\\
\mathcal L_G&=\left\langle\ell_F(d_\omega(z_F),X)\right\rangle_B.
\end{aligned}
\]
Field supervision uses unsquared physical relative $L_2$ on both channels and the complete grid, including observed locations. Appendix~\ref{app:zero} specifies the training extension for zero-energy targets. Grounding trains $z_F$ and the shared decoder for physical reconstruction; it does not directly supervise the EMA representation $\bar z$.

We also use $\mathcal L_V=r^{-1}\sum_{j=1}^r[\gamma-s_j(z_F)]_+^2$, a squared-hinge variance penalty inspired by VICReg~\citep{bardes2022vicreg}. Here $s_j$ is the batch standard deviation, $\gamma>0$ a margin, and $[q]_+=\max(q,0)$. The penalty is inactive when every coordinate meets the margin. The objective is
\[
\mathcal L=\mathcal L_F+\alpha_t\bigl(\mathcal L_J+\lambda_G\mathcal L_G+\lambda_V\mathcal L_V\bigr).
\]
The fixed coefficients weight the auxiliary terms, and the detached factor $\alpha_t\in[0,1]$ caps their summed component-gradient norms relative to the field gradient, both globally and on the encoder. Appendix~\ref{app:calibration} gives the exact rule and numerical coefficients; this is not scalar-loss balancing. Omitting an auxiliary term can also change $\alpha_t$, so comparisons concern the resulting capped training recipes rather than holding the realized auxiliary scale fixed.

Complete-view initialization trains the encoder and decoder with $\mathcal L_G+\lambda_V\mathcal L_V$. Joint training then inherits their weights, initializes the teacher from the encoder, and trains all online modules over mixed observations. The predictor and conditioner enter at random initialization. Field reconstruction updates the entire inference pathway; latent prediction updates the encoder, predictor, and conditioner; grounding updates the encoder and decoder; and variance regularization updates the encoder. Each joint-training step evaluates the context prediction, teacher target, and online full-view representation before the online parameter update and subsequent EMA update. No PDE-residual penalty is used. Optimization schedules and initialization details are specified in Appendix~\ref{app:training}.

\paragraph{Sparse-path decoder-only adaptation.}
After joint training, we freeze $(\theta,\psi,\eta)$ and replace $D_\omega$ with a newly initialized $D_{\omega'}$. SP optimizes only $\omega'$ through $\mathcal L_{\mathrm{SP}}=\langle\ell_F(d_{\omega'}(\operatorname{sg}(z_M)),X)\rangle_B$, where $d_{\omega'}=\mathcal N^{-1}\circ D_{\omega'}$. The frozen modules run without gradients. Both output fields are supervised using training pairs, without teacher evaluation or auxiliary losses. Complete held-out targets enter neither decoder fitting nor the inference pathway. Deployment uses $d_{\omega'}(z_M)$, preserving deterministic, single-pass inference. The acquisition protocol and decoder configurations are given in Appendix~\ref{app:adaptation}.

The complete-path fitting (DP) control instead fits a decoder on detached $z_F$ and tests it on $z_M$. It examines readout transfer between the representation pathways, whereas SP fits the deployment pathway directly. Section~\ref{sec:lossaware} analyzes this distinction at fixed representations.

\subsection{Common Targets and Physical Readout Adaptation}
\label{sec:lossaware}

Fix a teacher--decoder snapshot, with $Z=T(\widetilde X)$, $D=D_\omega$, and $\mathcal G=\mathcal G_M$. For fixed $\alpha>0$ and $\beta=1/(2N)$, consider $R_{\mathcal G}(f;D)=\mathbb E[\beta\|D(f)-\widetilde X\|^2+\alpha\|f-Z\|^2]$. This quadratic reference uses normalized fields, not the unsquared physical training loss; $\alpha$ is distinct from $\alpha_t$. Grounding and variance regularization shape the fixed snapshot rather than the free prediction $f$.

\paragraph{Conditional recovery.}
Assume $\widetilde X,Z\in L^2$, continuous $D$, and standard Borel observations with completed information sigma-fields. Write $\mu=\mathbb E[Z\mid\mathcal G]$, $\nu=\mathbb E[\widetilde X\mid\mathcal G]$, and $\Psi_{\mathcal G}(z)=\beta\|D(z)-\nu\|^2+\alpha\|z-\mu\|^2$. Define
\[
\begin{aligned}
V_{\mathcal G}&=\beta\mathbb E\|\widetilde X-\nu\|^2+\alpha\mathbb E\|Z-\mu\|^2,\qquad
A_{\mathcal G}(D)=\mathbb E\min_z\Psi_{\mathcal G}(z),\\
e_{\mathcal G}(f;D)&=\mathbb E[\Psi_{\mathcal G}(f)-\min_z\Psi_{\mathcal G}(z)].
\end{aligned}
\]
\begin{theorem}[Observation-conditioned risk under a common target]
\label{thm:scope_common_risk}
Every observation-measurable finite-risk predictor satisfies $R_{\mathcal G}(f;D)=V_{\mathcal G}+A_{\mathcal G}(D)+e_{\mathcal G}(f;D)$. The optimal risk is $R_{\mathcal G}^{\star}(D)=V_{\mathcal G}+A_{\mathcal G}(D)$, attained by a measurable minimizer of $\Psi_{\mathcal G}$.
\end{theorem}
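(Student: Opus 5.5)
The argument is a conditional bias--variance (Pythagorean) decomposition in each of the two squared terms, followed by a measurable-selection step for the pointwise minimizer. Throughout, $f$ denotes a $\mathcal G$-measurable random vector with $R_{\mathcal G}(f;D)<\infty$; then $D(f)$ is also $\mathcal G$-measurable, since $D$ is continuous.

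\textbf{Step 1: the latent term.} Write $Z-f=(Z-\mu)+(\mu-f)$ with $\mu=\mathbb E[Z\mid\mathcal G]$. Finite risk and $Z\in L^2$ give $f-\mu\in L^2$. The cross term satisfies $\mathbb E[(Z-\mu)^\top(\mu-f)]=\mathbb E[(\mu-f)^\top\mathbb E[Z-\mu\mid\mathcal G]]=0$, by pulling out the $\mathcal G$-measurable factor; this is legitimate because both factors are in $L^2$. Hence $\mathbb E\|f-Z\|^2=\mathbb E\|Z-\mu\|^2+\mathbb E\|f-\mu\|^2$.

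\textbf{Step 2: the field term.} The same argument with $\nu=\mathbb E[\widetilde X\mid\mathcal G]$ and the $\mathcal G$-measurable $D(f)$ gives $\mathbb E\|D(f)-\widetilde X\|^2=\mathbb E\|\widetilde X-\nu\|^2+\mathbb E\|D(f)-\nu\|^2$. Finite risk forces $D(f)\in L^2$, so the cross term is again well defined. Weighting by $\beta$ and $\alpha$ and summing yields
\[
R_{\mathcal G}(f;D)=V_{\mathcal G}+\mathbb E\,\Psi_{\mathcal G}(f).
\]

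\textbf{Step 3: isolating the incompatibility term.} Set $m(\omega)=\min_z\Psi_{\mathcal G}(z)$, computed pointwise in $\omega$. The minimum is attained because $\Psi_{\mathcal G}$ is continuous in $z$ and coercive, since $\alpha\|z-\mu\|^2\to\infty$ as $\|z\|\to\infty$. We have $0\le m\le\Psi_{\mathcal G}(\mu)=\beta\|D(\mu)-\nu\|^2$. This bound may not be integrable, but $m\le\Psi_{\mathcal G}(f)$, which is integrable whenever any finite-risk predictor exists; so $A_{\mathcal G}(D)=\mathbb E\,m$ is finite. Adding and subtracting $m$ gives $\mathbb E\,\Psi_{\mathcal G}(f)=A_{\mathcal G}(D)+e_{\mathcal G}(f;D)$ with $e_{\mathcal G}\ge0$, which is the claimed identity. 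It follows that $R^\star_{\mathcal G}\ge V_{\mathcal G}+A_{\mathcal G}(D)$.

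\textbf{Step 4: attainment, the main obstacle.} Equality requires a $\mathcal G$-measurable selector $f^\star(\omega)\in\arg\min_z\Psi_{\mathcal G}(z;\omega)$. The plan is as follows.
\begin{itemize}
\item Choose versions $\mu=g_\mu(Y)$ and $\nu=g_\nu(Y)$ that are Borel functions of the standard Borel observation $Y$. Then $\Psi(z,y)=\beta\|D(z)-g_\nu(y)\|^2+\alpha\|z-g_\mu(y)\|^2$ is a Carath\'eodory function: measurable in $y$ and continuous in $z$.
\item For each $y$, the argmin set is nonempty and compact.
\item Apply the Kuratowski--Ryll-Nardzewski measurable selection theorem, or the measurable maximum theorem (Aliprantis--Border, Thm.~18.19). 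Compactness can be arranged by restricting $z$ to the closed ball of radius $\|D(g_\mu(y))-g_\nu(y)\|\sqrt{\beta/\alpha}$ about $g_\mu(y)$, outside of which $\Psi$ exceeds $\Psi(g_\mu(y),y)$. This gives a Borel selector $s$, and we set $f^\star=s(Y)$.
\end{itemize}
Because $f^\star$ is $\mathcal G$-measurable, Steps 1--3 apply to it, with $e_{\mathcal G}(f^\star;D)=0$. Its finiteness must be checked directly. We have $\mathbb E\,\Psi(f^\star)=A_{\mathcal G}(D)<\infty$. Moreover $\alpha\|f^\star-\mu\|^2\le m$ and $\beta\|D(f^\star)-\nu\|^2\le m$, so $f^\star$ and $D(f^\star)$ lie in $L^2$. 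Hence $R_{\mathcal G}(f^\star;D)=V_{\mathcal G}+A_{\mathcal G}(D)=R^\star_{\mathcal G}(D)$.

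The completion of the sigma-fields only affects versions up to null sets, which do not change any of these expectations.
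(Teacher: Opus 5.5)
Your proposal is correct and follows essentially the paper's route: the two conditional-orthogonality identities give $R_{\mathcal G}(f;D)=V_{\mathcal G}+\mathbb E\,\Psi_{\mathcal G}(f)$, continuity and coercivity give nonempty compact argmin sets, a measurable selection in the standard Borel setting gives $f^\star$, and its square-integrability follows by bounding its components by the pointwise minimum. The one small difference is in Step 3: the paper gets finiteness of $A_{\mathcal G}(D)$ unconditionally by comparing with $z=0$, i.e.\ with the always-available constant predictor, which is exactly the finite-risk predictor your argument ("whenever any finite-risk predictor exists") implicitly needs.
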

The terms separate observation uncertainty, decoder incompatibility, and predictor excess risk (Appendix~\ref{app:scope_joint}). When $D(\mu)\in L^2$, a joint reference minimizer has no greater squared field risk than decoding $\mu$; the nonlinear example and strictness condition accompany Eq.~\eqref{eq:scope_decoded_mean_comparison}. Under the same target, decoder, and loss, nested information cannot increase optimal reference risk; complete-pair observation gives zero optimal risk exactly when $D(Z)=\widetilde X$ (Appendix~\ref{app:scope_information}).

\paragraph{Target weighting with shared features.}
Appendix~\ref{app:scope_recoverability} shows that positive-definite observation-measurable metrics with trace $r$ preserve the unrestricted latent-only optimum $\mu$. To examine restricted capacity, fix $\phi(Y_M)\in L^2(\mathbb R^p)$ with $\mathbb E[\phi\phi^\top]\succ0$, predictors $f_W=W\phi$ for $W\in\mathbb R^{r\times p}$, and $D_0(z)=Az+b$. For the mask-dependent nonnegative diagonal metric $\Lambda_M$ with trace $r$, define $\mathcal J_\Lambda^0(W)=\mathbb E[\beta\|D_0(W\phi)-\widetilde X\|^2+\alpha(W\phi-Z)^\top\Lambda_M(W\phi-Z)]$. Assume $\beta A^\top A+\alpha\Lambda_M\succeq cI_r$ almost surely for $c>0$. Let $W_0,W_\Lambda$ be the identity-metric and reweighted minimizers, and $\mathcal F_0(W)=\mathbb E\|D_0(W\phi)-\widetilde X\|^2$.

\begin{theorem}[Reconstruction effect of a mask-dependent target metric]
\label{thm:scope_weighting}
For $\rho=\|\widetilde X-D_0(Z)\|_{L^2}$ and $d_\Lambda=\|A(W_\Lambda-W_0)\phi\|_{L^2}$, the stated assumptions imply $\mathcal F_0(W_\Lambda)-\mathcal F_0(W_0)\geq d_\Lambda^2-2\rho d_\Lambda$. When $\rho=0$, the difference equals $d_\Lambda^2$.
\end{theorem}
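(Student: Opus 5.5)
The plan is to compare both minimizers against the ordinary least-squares map. I would first rewrite the field error through the decoder residual, then show that the only surviving cross term is controlled by a contraction applied to that residual.

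\textbf{Step 1: reduction to a linear-Gaussian-free geometry.} Set $G=\mathbb E[\phi\phi^\top]\succ0$ and $\varepsilon=\widetilde X-D_0(Z)$, so that $\|\varepsilon\|_{L^2}=\rho$. Let $W^\ast=\mathbb E[Z\phi^\top]G^{-1}$ be the least-squares map and $R=W^\ast\phi-Z$, so that $\mathbb E[R\phi^\top]=0$. For any $W$ we have $D_0(W\phi)-\widetilde X=A(W\phi-Z)-\varepsilon$. Writing $\Delta=W_\Lambda-W_0$ and expanding $\mathcal F_0(W_\Lambda)=\mathbb E\|A\Delta\phi+(D_0(W_0\phi)-\widetilde X)\|^2$ around $W_0$ gives
\[
\mathcal F_0(W_\Lambda)-\mathcal F_0(W_0)=d_\Lambda^2+2\,\mathbb E\bigl\langle A\Delta\phi,\;A(W_0-W^\ast)\phi+AR-\varepsilon\bigr\rangle .
\]
The $AR$ contribution equals $\operatorname{tr}(\Delta^\top A^\top A\,\mathbb E[R\phi^\top])=0$. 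Existence of $W_\Lambda$ follows from the assumption $\beta A^\top A+\alpha\Lambda_M\succeq cI_r$, which makes $\mathcal J^0_\Lambda$ strongly convex in $W$ because $G\succ0$. Beyond existence, $W_\Lambda$ enters only through $\Delta$.

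\textbf{Step 2: characterize $W_0$.} With the identity metric, put $Q=\beta A^\top A+\alpha I_r\succ0$. The first-order condition for $\mathcal J^0_I$ reads
\[
\beta A^\top\mathbb E[(A(W_0\phi-Z)-\varepsilon)\phi^\top]+\alpha\,\mathbb E[(W_0\phi-Z)\phi^\top]=0 .
\]
Substituting $W_0\phi-Z=(W_0-W^\ast)\phi+R$ gives $Q(W_0-W^\ast)G=\beta A^\top\mathbb E[\varepsilon\phi^\top]$. Hence
\[
\mathbb E\langle A\Delta\phi,A(W_0-W^\ast)\phi\rangle=\operatorname{tr}\bigl(\Delta^\top A^\top A(W_0-W^\ast)G\bigr)=\mathbb E\langle A\Delta\phi,S\varepsilon\rangle,
\qquad S=\beta AQ^{-1}A^\top .
\]
Combining this with Step 1, the cross term becomes $2\,\mathbb E\langle A\Delta\phi,(S-I)\varepsilon\rangle$.

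\textbf{Step 3: spectral bound and conclusion.} The matrix $S$ is symmetric. Using the SVD of $A$, its eigenvalues have the form $\beta\sigma^2/(\beta\sigma^2+\alpha)\in[0,1)$, so $0\preceq I-S\preceq I$. Cauchy--Schwarz in $L^2$ then gives
\[
\bigl|\mathbb E\langle A\Delta\phi,(I-S)\varepsilon\rangle\bigr|\le d_\Lambda\,\|(I-S)\varepsilon\|_{L^2}\le d_\Lambda\rho ,
\]
and therefore $\mathcal F_0(W_\Lambda)-\mathcal F_0(W_0)\ge d_\Lambda^2-2\rho d_\Lambda$. When $\rho=0$ we have $\varepsilon=0$ almost surely, so the cross term vanishes and the difference equals $d_\Lambda^2$ exactly.

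The main obstacle is Step 2. With $\rho\neq0$, $W_0$ is not the least-squares map, so the cross term does not vanish outright. The key is to express $W_0-W^\ast$ through $\varepsilon$ and to recognize the resulting operator $S$ as a contraction. The trace manipulations also need $\mathbb E[\varepsilon\phi^\top]$ to be finite, which follows from $\widetilde X,Z,\phi\in L^2$.
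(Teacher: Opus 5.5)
Your proposal is correct and takes essentially the same route as the paper. Both proofs expand around $W_0$, kill the $Z$-residual cross term by feature orthogonality, and use the identity-metric normal equation to rewrite what remains as $-2\,\mathbb E\langle A\Delta\phi,(I-S)\varepsilon\rangle$, where $I-S=(I+\tfrac{\beta}{\alpha}AA^\top)^{-1}$ is the paper's contraction $H$; Cauchy--Schwarz then gives the bound. The only cosmetic difference is that you absorb the $\varepsilon$ term through a trace identity, whereas the paper introduces the projection $W_R\phi$ and swaps it for $R$ by orthogonality.
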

This compares target metrics within a common feature class, not complete training algorithms. Under exact teacher decoding, the displacement is driven by mask-weighted approximation residuals, not irreducible uncertainty alone (Appendix~\ref{app:scope_weighting}); Appendix~\ref{app:scope_example} constructs a strict elliptic separation. Proposition~\ref{prop:scope_nonlinear_margin} bounds how nonlinear approximation and fitting errors affect this margin. Appendix~\ref{app:scope_nonlinear} also separates online grounding from teacher-decoding error and bounds loss changes caused by moving teacher targets. None of these conditions follows from grounding or EMA updates alone.

\paragraph{Physical readout transfer.}
Fix the backbone and a common joint law of fields and observations. For a nonzero physical target channel $y$, let $d$ include inverse normalization and channel selection. Define $R_F(d)=\mathbb E[\|d(z_F)-y\|_2/\|y\|_2]$, $R_M(d)=\mathbb E[\|d(z_M)-y\|_2/\|y\|_2]$, and $B_M(d)=\mathbb E[\|d(z_M)-d(z_F)\|_2/\|y\|_2]$, with finite expectations. For original and replacement decoders $d_0,d_1$, the triangle inequality yields
\begin{equation}
R_M(d_0)-R_M(d_1)\geq R_F(d_0)-R_F(d_1)-B_M(d_0)-B_M(d_1).
\label{eq:scope_minimal_readout_transfer}
\end{equation}
A complete-path gain transfers when it exceeds these discrepancies. Also, $B_M(d_0)\leq R_M(d_0)+R_F(d_0)$ relates the shared decoder's discrepancy to context reconstruction and grounding. With $\|v\|_r=\|v\|_2/\sqrt r$, let $J=\mathbb E\|z_M-\bar z\|_r^2$ and $\delta_{\mathrm{EMA}}=(\mathbb E\|\bar z-z_F\|_r^2)^{1/2}$. If $d$ is $L_d$-Lipschitz from this norm to physical field norm and $\kappa_y=(\mathbb E\|y\|_2^{-2})^{1/2}<\infty$, then
\begin{equation}
B_M(d)\leq L_d\kappa_y\bigl(\sqrt J+\delta_{\mathrm{EMA}}\bigr).
\label{eq:scope_minimal_jepa_transfer}
\end{equation}
Appendix~\ref{app:scope_readout} proves both bounds, including their application to a declared nonzero-target cohort. They give sufficient conditions for DP transfer; SP instead fits the sparse-path empirical risk directly. They do not guarantee held-out improvements or joint-training convergence.

\section{Experiments}
\label{sec:experiments}

\subsection{Experimental Setup}
\label{sec:experimental-setup}

We use the released DiffusionPDE pairs used by FunDPS~\citep{huang2024diffusionpde,yao2025fundps}: Darcy, Poisson, Helmholtz, and Navier--Stokes with and without a cylinder. Splits, normalization, and observation families are specified in Tables~\ref{tab:data}, \ref{tab:normalization}, and~\ref{tab:masks}. The primary test uniformly samples the active channel, with known-disk conditioning for cylinder flow (Appendix~\ref{app:boundary}). Forward evaluates $u$ and inverse evaluates $a$, using physical relative $L_2$ except Darcy inverse BER. Metric, zero-target, and regional conventions follow Appendices~\ref{app:metrics} and~\ref{app:zero}.

SCOPE--Full uses the native decoder; SCOPE--SP uses the primary adapted configuration in Table~\ref{tab:model}. Training budgets and architectures are listed in Tables~\ref{tab:training-campaigns}, \ref{tab:operators}, and~\ref{tab:aux-operators}. Main-table operators share our observation/task protocol; literature entries retain their source protocols (Appendix~\ref{app:baseline}). Auxiliary objectives and control configurations are specified in Appendix~\ref{app:controls}, and checkpoint, pairing, and release conventions in Appendix~\ref{app:reproducibility}.

\begin{table}[t]
\centering
\caption{\textbf{Sparse recovery across PDE settings.} Forward evaluates $u$; inverse evaluates $a$. Values are percentages: Darcy inverse BER, physical relative $L_2$ otherwise. \textbf{Bold}/\underline{underline} mark the best/second-best unrounded results within the main common-observation group. Literature rows retain their source protocols (Appendix~\ref{app:baseline}); training budgets and model sizes differ (Table~\ref{tab:training-campaigns}).}
\label{tab:main}
\begingroup
\color{black}
\small
\setlength{\tabcolsep}{3pt}
\renewcommand{\arraystretch}{1.12}
\resizebox{\linewidth}{!}{
\begin{tabular}{@{}l*{10}{r}@{}}
\toprule
\rowcolor{ScopeTint}
Method & \multicolumn{2}{c}{Darcy} & \multicolumn{2}{c}{Poisson} & \multicolumn{2}{c}{Helmholtz} & \multicolumn{2}{c}{NS} & \multicolumn{2}{c}{NS (BCs)$^\dagger$} \\
\cmidrule(lr){2-3}\cmidrule(lr){4-5}\cmidrule(lr){6-7}\cmidrule(lr){8-9}\cmidrule(l){10-11}
 & Fwd.$\downarrow$ & Inv. (BER)$\downarrow$ & Fwd.$\downarrow$ & Inv.$\downarrow$ & Fwd.$\downarrow$ & Inv.$\downarrow$ & Fwd.$\downarrow$ & Inv.$\downarrow$ & Fwd.$\downarrow$ & Inv.$\downarrow$ \\
\midrule
\scopepanel{11}{Literature-Reported: Deterministic Methods}
FNO~\citep{li2021fno} & 28.20 & 49.30 & 100.90 & 232.70 & 98.20 & 218.20 & 101.40 & 96.00 & 82.80 & 69.60 \\
PINO~\citep{li2024pino} & 35.20 & 49.20 & 107.10 & 231.90 & 106.50 & 216.90 & 101.40 & 96.00 & 81.10 & 69.50 \\
DeepONet~\citep{lu2021deeponet} & 38.30 & 41.10 & 155.50 & 105.80 & 123.10 & 132.80 & 103.20 & 97.20 & 97.70 & 91.90 \\
PINN~\citep{raissi2019pinn} & 48.80 & 59.70 & 128.10 & 130.00 & 142.30 & 160.00 & 142.70 & 146.80 & 100.10 & 105.50 \\
\addlinespace[2pt]
\scopepanel{11}{Mask-Aware Deterministic Baselines}
CNO & 12.78 & 5.71 & 35.33 & 45.09 & 33.99 & 43.43 & 26.42 & 27.55 & 33.74 & 28.25 \\
Transolver & 3.20 & 5.43 & 4.19 & 18.41 & 3.89 & 17.79 & 16.39 & 23.66 & 5.26 & 3.96 \\
FNO & 10.83 & 13.10 & 26.09 & 49.31 & 27.20 & 49.17 & 19.10 & 31.66 & 26.31 & 21.94 \\
DeepONet & 11.03 & 11.19 & 16.65 & 36.03 & 17.03 & 41.87 & 39.59 & 50.28 & 34.35 & 27.76 \\
\addlinespace[2pt]
\scopepanel{11}{Literature-Reported: Generative / Sampling Methods}
DiffusionPDE~\citep{huang2024diffusionpde} & 6.07 & 7.87 & 4.88 & 21.10 & 12.64 & 19.07 & 3.78 & 9.63 & 9.69 & 4.18 \\
FunDPS (200 steps)~\citep{yao2025fundps} & 2.88 & 6.78 & 2.04 & 24.04 & 2.20 & 20.07 & 3.99 & 9.87 & 5.91 & 4.31 \\
FunDPS (500 steps)~\citep{yao2025fundps} & 2.49 & 5.18 & 1.99 & 20.47 & 2.13 & 17.16 & 3.32 & 8.48 & 4.90 & 4.08 \\
PRISMA-50~\citep{sawhney2025prisma} & 2.90 & 6.56 & --- & --- & --- & --- & --- & --- & --- & --- \\
Equi-FunDPS~\citep{tolooshams2025equireg} & --- & --- & --- & --- & 2.12 & 15.91 & 3.06 & 7.84 & --- & --- \\
PISD~\citep{gallon2026pisd} & --- & --- & 3.08 & 13.81 & 3.47 & 12.76 & --- & --- & --- & --- \\
DDIS~\citep{lin2026ddis} & --- & --- & --- & 12.32 & --- & 12.20 & --- & 7.81 & --- & --- \\
\addlinespace[2pt]
\scopepanel{11}{Our Work}
\rowcolor{ScopeTint}
\textbf{SCOPE--Full} & \underline{2.17} & \underline{1.56} & \underline{1.68} & \underline{9.50} & \underline{1.64} & \underline{8.70} & \underline{2.58} & \underline{6.97} & \underline{2.11} & \underline{0.65} \\
\rowcolor{ScopeTint}
\textbf{SCOPE--SP-10M} & \textbf{2.17} & \textbf{1.55} & \textbf{1.67} & \textbf{9.43} & \textbf{1.63} & \textbf{8.64} & \textbf{2.57} & \textbf{6.94} & \textbf{1.97} & \textbf{0.45} \\
\bottomrule
\end{tabular}}
\endgroup
\tabnote{$^\dagger$ Cylinder flow includes known-disk values on both channels; forward error uses nonzero targets (Appendix~\ref{app:zero}). SCOPE--SP-10M is the primary adapted model (Appendix~\ref{app:architecture}). Dashes denote unreported literature entries. The ranking excludes literature rows and separate backbone ablations; shading does not denote significance.}
\end{table}

\subsection{Sparse Field Recovery}
\label{sec:main-results}

SCOPE achieves \textbf{competitive sparse forward and inverse recovery} across the evaluated elliptic and flow problems (Table~\ref{tab:main}), using deterministic, single-pass inference. The channelwise comparison also shows lower mean errors for both outputs of the evaluated SP head relative to Full (Table~\ref{tab:all-channels}): decoder adaptation improves reconstruction beyond the task's primary hidden-field endpoint. Darcy inverse BER decreases alongside continuous reconstruction error, showing that the improvement extends to thresholded permeability classification rather than only continuous-valued predictions (Tables~\ref{tab:all-channels} and~\ref{tab:app-darcy-ber}). Tables~\ref{tab:app-forward-u}--\ref{tab:app-inverse-a} report the corresponding per-record dispersion.

\begin{figure}[t]
\centering
\includegraphics[width=\linewidth]{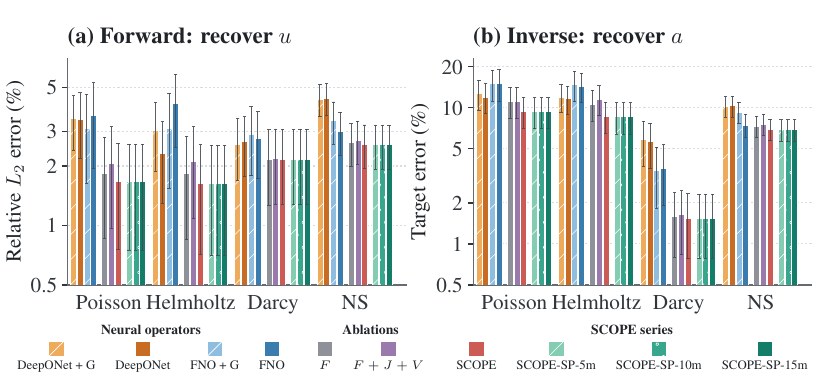}
\caption{\textbf{Grounding and readout comparisons.} Bars show mean errors with per-record SD on logarithmic axes. Forward evaluates $u$; inverse evaluates $a$, using Darcy BER. Auxiliary operators use their own protocols and metric units (Appendix~\ref{app:auxiliary-operators}); additional supervision is compared within each family. SCOPE denotes Full; all SP heads use its frozen backbone.}
\label{fig:hidden-field-current}
\label{fig:darcy-ber-current}
\end{figure}

\subsection{Grounding and Decoder Adaptation}
\label{sec:recipe-results}
\label{sec:adaptation-results}

\paragraph{Grounding.}
Full has lower mean error than FJV at every primary endpoint (Table~\ref{tab:recipes}). Both share complete-view pretraining; FJV removes main-stage grounding and recomputes auxiliary scaling. Thus the comparison supports grounding within the predictive recipe: supervising online full-view reconstruction improves the resulting sparse recovery, rather than merely fitting complete inputs (Figure~\ref{fig:hidden-field-current}; Appendices~\ref{app:training}--\ref{app:calibration}).

\paragraph{Field-only comparison.}
Full lowers all inverse mean errors relative to FO, while its forward advantage is task-dependent (Table~\ref{tab:recipes}). This supports the complete training pipeline for the evaluated inverse tasks. Because FO omits pretraining, the comparison does not isolate the latent objective.

\paragraph{Additional operator supervision.}
Auxiliary $+G$ lowers FNO's Poisson and Helmholtz forward errors but raises Darcy and flow forward errors. For DeepONet, it lowers Darcy forward error but raises forward and inverse errors on Poisson and Helmholtz (Tables~\ref{tab:app-forward-u}--\ref{tab:app-darcy-ber}). Additional reconstruction therefore does not automatically improve sparse recovery. These within-family results complement Full--FJV without treating the auxiliary objective as identical to SCOPE grounding (Appendix~\ref{app:auxiliary-operators}).

\paragraph{Fixed-backbone adaptation.}
The primary SP head lowers Full's mean error at every primary endpoint, demonstrating that physical readout can improve without changing the learned context representation (Table~\ref{tab:adaptation}). Both DP and SP improve cylinder recovery over the native decoder, while SP has lower primary mean errors than DP at every matched head size (Table~\ref{tab:capacity}). The latter comparison holds architecture and training budget fixed and supports fitting the decoder on the context representations used at deployment. Native-to-adapted comparisons additionally change capacity and optimization (Appendices~\ref{app:adaptation} and~\ref{app:runtime}).

\paragraph{Decoder capacity.}
Larger heads reduce cylinder forward and inverse errors under both DP and SP, whereas changes on other settings are small and sometimes nonmonotone (Table~\ref{tab:capacity}). Thus readout capacity is more consequential for cylinder recovery in these runs, rather than uniformly improving every PDE as the head grows.

\paragraph{Record-level changes.}
Most records improve in every reported SP--Full forward comparison, with still larger improved-record fractions for inverse recovery on Poisson, Helmholtz, and periodic NS (Tables~\ref{tab:app-paired}--\ref{tab:app-paired-inverse}). Lower aggregate means therefore accompany improvements across a majority of records. Darcy's paired continuous-error diagnostic also favors SP (Table~\ref{tab:app-paired-darcy-l2}); this is distinct from BER, and all paired statistics concern fixed models rather than training-seed variability.

\subsection{Changed Observation Conditions}
\label{sec:sensitivity}

Under Bernoulli fluid observations, the unchanged adapted cylinder heads have higher forward and inverse errors than under the primary uniform condition, while larger heads retain lower mean errors (Tables~\ref{tab:cylinder-native} and~\ref{tab:capacity}). The capacity trend thus persists beyond the adaptation acquisition law, but recovery remains sensitive to the supplied measurements. Both sensor count and sampling change, so this comparison does not isolate layout effects (Appendix~\ref{app:boundary}). Appendix~\ref{app:full-results} specifies the reported evaluation coverage.

\section{Conclusion}
\label{sec:conclusion}

SCOPE combines full-target latent prediction, shared physical reconstruction, and offline decoder adaptation for competitive sparse recovery. Complete training pairs supply latent and physical targets, while measurements and masks condition prediction. A shared nonlinear decoder connects context-predicted and online full-view representations to physical reconstruction. The reported comparisons support main-stage grounding and decoder fitting on Full. SP fits a replacement decoder on frozen, context-predicted representations, retaining deterministic, single-pass inference without test-time optimization. Matched DP--SP comparisons further support fitting the readout on the representations used at deployment. The quadratic analysis separates observation uncertainty, decoder incompatibility, and predictor excess risk at fixed teacher--decoder snapshots, while separate physical-error bounds characterize sufficient conditions for readout transfer. Together, these results distinguish representation learning from decoder fitting and clarify why common targets do not require identical predictions across observation contexts.

\paragraph{Future Work.}
Future work will extend SCOPE beyond fixed grids and assess robustness across multiple training seeds and a broader range of observation conditions for adapted models. Matched-budget, capacity-controlled comparisons can help disentangle the effects of decoder adaptation, added capacity, and additional optimization. Further theoretical work will study joint-training convergence and characterize conditions for held-out improvement beyond the current fixed-representation setting.

\label{main-text-end}

\clearpage
\ifnum\value{page}>10\relax
  \PackageError{SCOPE}{Main text exceeds nine pages}{Shorten the main text or adjust float placement; do not change the template font size or margins.}
\fi

\section*{Reproducibility Statement}
Data sources, preprocessing, observation sampling, architecture, optimization, and evaluation are specified in Appendices~\ref{app:implementation}--\ref{app:metrics}, with operator configurations in Appendix~\ref{app:operator-controls}. Theoretical assumptions and proofs are provided in Appendices~\ref{app:scope_recoverability}--\ref{app:scope_nonlinear}. We use existing DiffusionPDE paired-field datasets~\citep{huang2024diffusionpde,yao2025fundps}. Public code and data-download instructions are available at \url{https://github.com/ru1ch3n/SCOPE}; pretrained-model availability and checksums are documented in that repository.

\section*{Statement on AI Usage}
AI tools, including large language models, were used to support manuscript language refinement, literature search and discovery, and aspects of research development and execution, such as discussing experimental design, analysis procedures, and implementation approaches. The authors independently reviewed, validated, and revised all AI-assisted material and remain fully responsible for the accuracy and integrity of the methods, results, references, and conclusions reported in this paper.

\bibliography{references}
\bibliographystyle{plainnat}

\clearpage
\appendix

%

\section{Recoverability and the Full-Target Supervision Rule}
\label{app:scope_recoverability}

All expectations concern a fixed population law for the complete normalized pair and its observation. The deterministic teacher produces $Z=T(\widetilde X)$. Observations take values in standard Borel spaces, and information sigma-fields are completed. For a random vector $U$, write $\|U\|_{L^2}=(\mathbb E\|U\|^2)^{1/2}$. This notation denotes population error, not a measured validation statistic.

\subsection{Conditional Error and Recoverable Statistics}

Let $\mu=\mathbb E[Z\mid\mathcal G]$ and
$C=\mathbb E[(Z-\mu)(Z-\mu)^\top\mid\mathcal G]$.
For a fixed direction $v$ and a square-integrable observation-measurable scalar prediction $q$, conditional orthogonality gives
\begin{align}
\mathbb E|v^\top Z-q|^2
&=
\mathbb E|v^\top Z-v^\top\mu|^2
+\mathbb E|v^\top\mu-q|^2 \notag\\
&=
v^\top\mathbb E[C]v+
\mathbb E|v^\top\mu-q|^2.
\label{eq:scope_direction_risk}
\end{align}
The cross term vanishes because
$\mathbb E[Z-\mu\mid\mathcal G]=0$.
Taking $q=v^\top\mu$ proves the minimum-risk formula.
It is zero exactly when $v^\top Z=v^\top\mu$ almost surely, equivalently when $v^\top Z$ is observation-measurable. Thus the exactly recoverable fixed linear directions form $\ker\mathbb E[C]$.

For a mask-measurable orthogonal projector $P_M$,
$\mathbb E[P_MZ\mid\mathcal G]=P_M\mu$.
The vector-valued projection identity yields
\begin{equation}
\inf_q\mathbb E\|P_MZ-q(Y_M)\|^2
=
\mathbb E\operatorname{tr}(P_MCP_M).
\label{eq:scope_group_risk}
\end{equation}
Since $P_MCP_M\succeq0$, the right-hand side is zero exactly when $P_MCP_M=0$ almost surely. A rule selecting spatial token coordinates supplies $P_M$ but does not establish this condition.

For two views, a fixed direction recoverable from both lies in
\[
\ker\!\left(
\mathbb E[C_{\mathcal G_1}]
+
\mathbb E[C_{\mathcal G_2}]
\right).
\]
This follows because the sum of the two nonnegative directional residual errors vanishes exactly when each vanishes. Recoverability is stronger than agreement of conditional means: an unobserved independent centered coordinate can have equal zero predictions in both views while retaining positive uncertainty.

When masks are random, these are almost-sure statements under their joint law with the fields. A result for one fixed mask instead uses that fixed-mask law. The characterization concerns fixed linear statistics of the target, not a per-record classifier of observed and missing latent tokens.

\subsection{Partitioning versus Reweighting}

For nonempty coordinate groups $I_s(M),I_m(M)$ with projectors $P_s,P_m$ and sizes $n_s,n_m$, we have
\[
\frac1r\|f-Z\|^2
=
\frac{n_s}{r}\frac{\|P_s(f-Z)\|^2}{n_s}
+
\frac{n_m}{r}\frac{\|P_m(f-Z)\|^2}{n_m}.
\]
Thus the size-weighted grouped loss equals the full-target loss for every prediction and target.

Using coefficients $\lambda_s,\lambda_m\geq0$ with
$\lambda_s+\lambda_m=1$ instead gives
\[
\Lambda_M=
\frac{r\lambda_s}{n_s}P_s+
\frac{r\lambda_m}{n_m}P_m.
\]
The loss is $r^{-1}(f-Z)^\top\Lambda_M(f-Z)$ and
$\operatorname{tr}(\Lambda_M)=r$.
Equal group coefficients are generally not equal coordinate coefficients when group sizes differ.
At an empty-group endpoint, the rule must be specified on the remaining group without division by zero.

For any observation-measurable positive-semidefinite metric with trace $r$,
\begin{equation}
\mathbb E[(f-Z)^\top\Lambda_M(f-Z)\mid\mathcal G]
=
\operatorname{tr}(\Lambda_MC)
+
(f-\mu)^\top\Lambda_M(f-\mu).
\label{eq:scope_weighted_conditional_risk}
\end{equation}
To prove this, write $f-Z=(f-\mu)-(Z-\mu)$ and condition on $\mathcal G$.
The cross term vanishes, and the residual quadratic has conditional expectation $\operatorname{tr}(\Lambda_MC)$.
The trace bound implies $\|\Lambda_M\|_{\mathrm{op}}\leq r$, ensuring integrability for square-integrable $f$ and $Z$.

If $\Lambda_M\succ0$ almost surely, the unrestricted latent-only optimum remains $\mu$.
If it is singular, deviations in its kernel are not penalized.
These statements distinguish loss equivalence, positive reweighting, and omitted supervision; they do not identify target groups with recoverable information.

For the identity metric, the same calculation gives
\begin{equation}
\mathbb E\|f-Z\|^2
=
\mathbb E\operatorname{tr}(C)+
\mathbb E\|f-\mu\|^2.
\label{eq:scope_full_latent_risk}
\end{equation}
In particular, if $v^\top Z$ is exactly recoverable,
$\mathbb E|v^\top f-v^\top Z|^2
\leq\|v\|^2\mathbb E\|f-\mu\|^2$.
The identity criterion therefore controls all recoverable directions without requiring their explicit selection, although it does not guarantee that the trained predictor has small excess.

\section{Physical Effects of Mask-Dependent Target Weighting}
\label{app:scope_weighting}

This section fixes the same features, teacher, decoder, field loss, and observation law in both comparisons. Its rectangular class $f_W=W\phi$ contains all coefficient matrices $W\in\mathbb R^{r\times p}$. It is a reduced model of shared prediction capacity, not a claim that the complete SCOPE network is linear or trained by least squares.

\subsection{Existence and Normal Equations}

Under the assumptions of Theorem~\ref{thm:scope_weighting}, define
\[
S=\mathbb E[\phi\phi^\top],\qquad
W_Z=\mathbb E[Z\phi^\top]S^{-1},
\]
and
\[
R=\widetilde X-AZ-b,\qquad
W_R=\mathbb E[R\phi^\top]S^{-1}.
\]
Feature-projection residuals satisfy
\begin{equation}
\mathbb E[(Z-W_Z\phi)\phi^\top]=0,
\qquad
\mathbb E[(R-W_R\phi)\phi^\top]=0.
\label{eq:scope_feature_projection}
\end{equation}

Let $K_M=\beta A^\top A+\alpha\Lambda_M$ and define
\[
\mathcal T_\Lambda(U)
=
\mathbb E[K_MU\phi\phi^\top].
\]
This operator is self-adjoint under the Frobenius inner product. Moreover,
\[
\langle U,\mathcal T_\Lambda(U)\rangle_F
=
\mathbb E[(U\phi)^\top K_M(U\phi)]
\geq c\,\operatorname{tr}(USU^\top).
\]
Because $S\succ0$, the lower bound is positive for every nonzero $U$.
The objective is therefore a strictly convex coercive quadratic in $W$, with a unique minimizer.
All its coefficients are finite under the stated second-moment assumptions and the bounded metric.

For the identity metric, let $K_0=\beta A^\top A+\alpha I_r$.
Its normal equation gives
\[
K_0W_0S
=
\mathbb E[
\{\beta A^\top(\widetilde X-b)+\alpha Z\}\phi^\top].
\]
Substitution of $\widetilde X-b=AZ+R$ yields
\begin{equation}
W_0=W_Z+\beta K_0^{-1}A^\top W_R.
\label{eq:scope_identity_solution}
\end{equation}

\subsection{Proof of the Reconstruction Comparison}

\begin{proof}[Proof of Theorem~\ref{thm:scope_weighting}]
Set $\Delta=W_\Lambda-W_0$ and
\[
H=I_{2N}-\beta AK_0^{-1}A^\top
=
\left(I_{2N}+\frac{\beta}{\alpha}AA^\top\right)^{-1}.
\]
A singular-value decomposition of $A$ verifies the equality.
In particular, $H\succ0$ and $\|H\|_{\mathrm{op}}\leq1$.

Expanding the field-risk difference gives
\begin{align}
\mathcal F_0(W_\Lambda)-\mathcal F_0(W_0)
&=
\mathbb E\|A\Delta\phi\|^2 \notag\\
&\quad+
2\mathbb E\langle A\Delta\phi,
AW_0\phi+b-\widetilde X\rangle.
\label{eq:scope_field_difference_expansion}
\end{align}
By \eqref{eq:scope_identity_solution},
\[
AW_0\phi+b-\widetilde X
=
A(W_Z\phi-Z)+
\beta AK_0^{-1}A^\top W_R\phi-R.
\]
The cross term involving $W_Z\phi-Z$ vanishes by
\eqref{eq:scope_feature_projection}.
The same orthogonality permits replacing $W_R\phi$ with $R$ in the remaining cross expectation. Hence
\begin{equation}
\mathcal F_0(W_\Lambda)-\mathcal F_0(W_0)
=
d_\Lambda^2-
2\mathbb E\langle A\Delta\phi,HR\rangle.
\label{eq:scope_exact_weighting_cost}
\end{equation}
Cauchy--Schwarz and the contraction of $H$ give the stated lower bound.
When $R=0$ almost surely, the cross term is zero, proving the exact equality and its strictness condition.
\end{proof}

The exact-grounding conclusion is a projection result within the common feature class: the identity-metric predictor also minimizes field risk in that class.
Its relevance to mask-dependent supervision comes from the displacement equation below, which identifies when a reweighted fit departs from that projection.
It is not a guarantee that adding latent supervision improves a field-only predictor.

\subsection{Approximation Residuals Drive the Displacement}

Subtracting the full-target normal equation from the reweighted normal equation gives
\begin{equation}
\mathcal T_\Lambda(\Delta)
=
\alpha\mathbb E[
(\Lambda_M-I_r)(Z-W_0\phi)\phi^\top].
\label{eq:scope_weighting_displacement}
\end{equation}
Under exact grounding, $W_0=W_Z$. Since $\Lambda_M$ and $\phi$ are observation-measurable,
\[
\mathbb E[
(\Lambda_M-I_r)(Z-W_Z\phi)\phi^\top]
=
\mathbb E[
(\Lambda_M-I_r)(\mu-W_Z\phi)\phi^\top].
\]
The irreducible residual $Z-\mu$ has zero conditional mean and contributes no term.
Thus the population shift arises from the interaction of mask-dependent weights with approximation residuals of the common feature class.

If $\Lambda_M$ is constant, the right-hand side vanishes by feature orthogonality.
If $\mu=W_Z\phi$, it vanishes for every observation-measurable metric.
If the reweighting produces only a displacement annihilated by $A$, there is no field-risk difference.
These are explicit boundary cases, not exceptions hidden by the strict-improvement statement.

\subsection{A Sparse Elliptic Illustration}
\label{app:scope_example}

Let $s,t$ be independent Rademacher variables and $a=(s,t)^\top$.
Consider the two-node elliptic system
\[
Lu=a,\qquad
L=
\begin{pmatrix}
2&-1\\
-1&2
\end{pmatrix},
\qquad
K=L^{-1},
\qquad
\widetilde X=
\begin{pmatrix}
a\\Ka
\end{pmatrix}.
\]
Define
\[
O=\frac1{\sqrt2}
\begin{pmatrix}
1&1\\
1&-1
\end{pmatrix},
\qquad
Z=Oa,
\qquad
A=
\begin{pmatrix}
O^\top\\KO^\top
\end{pmatrix}.
\]
Then $D_0(z)=Az$ exactly decodes the deterministic teacher target, and
$A^\top A=\operatorname{diag}(2,10/9)$.

Independently draw $M\in\{0,1\}$ with equal probabilities.
Observe $V=s$ when $M=0$ and $V=t$ when $M=1$.
Use common features $\phi=(1,M,V)^\top$, whose second-moment matrix is
\[
S=
\begin{pmatrix}
1&1/2&0\\
1/2&1/2&0\\
0&0&1
\end{pmatrix}\succ0.
\]
The mask is available to the predictor, but the feature class omits the interaction $MV$.

For arbitrary $\alpha,\beta>0$, compare the identity metric with
\[
\Lambda_0=\operatorname{diag}(3/2,1/2),
\qquad
\Lambda_1=\operatorname{diag}(1/2,3/2).
\]
Both latent coordinates and the field loss remain supervised.
Solving the normal equations gives
\[
W_0\phi=
\begin{pmatrix}
V/\sqrt2\\0
\end{pmatrix},
\qquad
W_\Lambda\phi=
\begin{pmatrix}
V/\sqrt2\\
-\dfrac{\alpha V}
{2\sqrt2(\alpha+10\beta/9)}
\end{pmatrix}.
\]
Indeed, the weighted coefficient equation for $V$ has diagonal matrix
$\operatorname{diag}(\alpha+2\beta,\alpha+10\beta/9)$
and right-hand side
$((\alpha+2\beta)/\sqrt2,-\alpha/(2\sqrt2))^\top$.
The intercept and mask-only coefficients are zero.
Consequently,
\begin{equation}
\mathcal F_0(W_\Lambda)-\mathcal F_0(W_0)
=
\frac{45\alpha^2}{4(9\alpha+10\beta)^2}>0.
\label{eq:scope_elliptic_gap}
\end{equation}

The example also illustrates why coordinate labels do not certify recoverability.
For $M=0$,
\[
C_{\mathcal G}
=
\frac12
\begin{pmatrix}
1&-1\\
-1&1
\end{pmatrix}.
\]
Each coordinate is conditionally uncertain, yet
$(Z_1+Z_2)/\sqrt2=s$ is exactly recoverable.

This is an analytical construction, not a measured PDE benchmark result.
Adding $MV$ makes the conditional latent mean representable and removes this particular distortion.
The example demonstrates a finite-capacity mechanism, not universal inferiority of separated supervision.

\section{Nonlinear Risk and Observation Information}
\label{app:scope_joint}
\label{app:proofs}

This appendix analyzes the full-target objective with the fixed continuous nonlinear decoder, without restricting predictors to the feature class used in Appendix~\ref{app:scope_weighting}.

\subsection{Proof of the Common-Objective Decomposition}

\begin{proof}[Proof of Theorem~\ref{thm:scope_common_risk}]
For observation-measurable $f$ of finite risk, conditional orthogonality gives
\[
\mathbb E\|D(f)-\widetilde X\|^2
=
\mathbb E\|\widetilde X-\nu\|^2+
\mathbb E\|D(f)-\nu\|^2
\]
and
\[
\mathbb E\|f-Z\|^2
=
\mathbb E\|Z-\mu\|^2+
\mathbb E\|f-\mu\|^2.
\]
Combining the two identities yields
\[
R_{\mathcal G}(f;D)
=
V_{\mathcal G}+
\mathbb E\Psi_{\mathcal G}(f).
\]

To justify an attained optimum, define
\[
Q=(\sqrt\beta\,\widetilde X,\sqrt\alpha\,Z),
\quad
H_D(z)=(\sqrt\beta\,D(z),\sqrt\alpha\,z),
\quad
m=\mathbb E[Q\mid\mathcal G].
\]
Then $\Psi_{\mathcal G}(z)=\|H_D(z)-m\|^2$.
For every finite $m$, this is continuous and coercive in $z$ because $\alpha>0$.
It therefore has a nonempty compact minimizer set.
The continuous criterion has a measurable minimum and admits a measurable minimizing selection in the stated standard Borel setting.

For integrability, define $d_D(m)^2=\min_z\|H_D(z)-m\|^2$.
Comparison with $z=0$ gives
\[
d_D(m)^2\leq\|H_D(0)-m\|^2,
\]
whose expectation is finite.
For a minimizing selection $f^\star$,
\[
\|H_D(f^\star)\|^2
\leq
2d_D(m)^2+2\|m\|^2.
\]
Thus it has finite risk.
Integrating the minimum proves
$R_{\mathcal G}^\star=V_{\mathcal G}+A_{\mathcal G}$.
Subtracting this minimum gives the stated expression for predictor excess.
Existence does not imply uniqueness or convexity of the nonlinear conditional problem.
\end{proof}

The uncertainty term depends on the fixed teacher and loss coefficients.
Decoder incompatibility concerns whether the conditional latent and physical targets can be represented by the same code; it is not identical to complete-field teacher reconstruction error.

\subsection{Comparison with Latent-Mean Decoding}

Assume additionally $D(\mu)\in L^2$.
Let
\[
F_{\mathcal G}(f;D)=
\mathbb E\|D(f)-\widetilde X\|^2,
\qquad
\Gamma_{\mathcal G}(D)=
R_{\mathcal G}(\mu;D)-R_{\mathcal G}^\star(D).
\]
Using \eqref{eq:scope_full_latent_risk},
\[
R_{\mathcal G}(f;D)-R_{\mathcal G}(\mu;D)
=
\beta\{F_{\mathcal G}(f;D)-F_{\mathcal G}(\mu;D)\}
+
\alpha\mathbb E\|f-\mu\|^2.
\]
Therefore,
\begin{equation}
\beta\{F_{\mathcal G}(f;D)-F_{\mathcal G}(\mu;D)\}
=
e_{\mathcal G}(f;D)-\Gamma_{\mathcal G}(D)
-\alpha\mathbb E\|f-\mu\|^2.
\label{eq:scope_decoded_mean_comparison}
\end{equation}
At a joint optimum, the field-risk difference is nonpositive, and it is negative when $\Gamma_{\mathcal G}>0$.
For continuously differentiable $D$, a sufficient condition is
$J_D(\mu)^\top(D(\mu)-\nu)\neq0$
on a positive-probability set.
The conditional criterion then has nonzero gradient at $\mu$, so its nonnegative minimum gap is positive on that set and has positive expectation.

This compares two prediction rules for the same snapshot.
For the same admissible class,
\[
\inf_f F_{\mathcal G}(f;D)
\leq
F_{\mathcal G}(f^\star;D)
\leq
F_{\mathcal G}(\mu;D).
\]
The result therefore does not establish that adding latent supervision improves field-only learning.

\paragraph{An exactly decodable deterministic teacher.}
Let $\widetilde X$ equal $1$ or $4$ with equal probabilities, set $T(x)=\sqrt{x}$ on that support, and let $D(z)=z^2$.
Then $Z=T(\widetilde X)$ and $D(Z)=\widetilde X$ almost surely.
With uninformative observations, $\mu=3/2$, $\nu=5/2$, and $D(\mu)=9/4\neq\nu$.
Moreover,
$D'(\mu)(D(\mu)-\nu)=-3/4\neq0$.
This is a valid deterministic-teacher example of the nonlinear mismatch, not an experimental result.

\subsection{Additional Information and Equality across Contexts}
\label{app:scope_information}

\begin{definition}[Prediction across observation contexts]
\label{def:consistency}
Let $Y_i$ be observations of the same physical realization, let $\mathcal G_i=\sigma(Y_i)$ be completed observation sigma-fields, and let $f_i(Y_i)$ be square-integrable predictions in a common target space. \ding{182}~\textbf{Complete invariance} requires $f_1(Y_1)=f_2(Y_2)$ almost surely. \ding{183}~\textbf{Conditional predictive coherence} requires $\mathbb E[f_\ell(Y_\ell)\mid\mathcal G_k]=f_k(Y_k)$ almost surely whenever $\mathcal G_k\subseteq\mathcal G_\ell$.
\end{definition}

For fixed targets, decoder, coefficients, and population law, suppose $\mathcal G_k\subseteq\mathcal G_\ell$. The admissible predictors under the smaller information set form a subset of those under the larger set, so $R_\ell^\star(D)\leq R_k^\star(D)$. Writing $m_i=\mathbb E[Q\mid\mathcal G_i]$, conditional orthogonality gives
\[
V_k=V_\ell+\mathbb E\|m_\ell-m_k\|^2
\]
and hence
\[
R_k^\star(D)-R_\ell^\star(D)
=
\mathbb E\|m_\ell-m_k\|^2+A_k(D)-A_\ell(D).
\]
The incompatibility difference is not separately nonnegative. For fitted predictions, their excess-risk difference must also be included; ideal information gain need not yield a decrease in every learned error.

At complete paired-field observation, both $\widetilde X$ and $Z$ are measurable. The optimal joint risk is zero if and only if $D(Z)=\widetilde X$ almost surely. Sufficiency follows from the choice $f=Z$; necessity follows from the attained nonnegative minimum and the positivity of both loss coefficients. A larger sensor count alone does not establish information nesting: a random unordered mask may not reveal a smaller prefix unless the acquisition ordering is fixed or otherwise available.

For cross-view equality, let $\mathcal C=\mathcal G_1\cap\mathcal G_2$, $\mu_i=\mathbb E[Z\mid\mathcal G_i]$, and $\mu_{\mathcal C}=\mathbb E[Z\mid\mathcal C]$. Because the observation sigma-fields are completed, an identical prediction available from both views has a $\mathcal C$-measurable version. Its optimal value is $\mu_{\mathcal C}$. Applying conditional orthogonality for each view yields
\[
\inf_{\substack{f_i\in L^2(\mathcal G_i)\\f_1=f_2}}
\frac12\sum_{i=1}^2\mathbb E\|Z-f_i\|^2
-
\frac12\sum_{i=1}^2\mathbb E\|Z-\mu_i\|^2
=
\frac12\sum_{i=1}^2
\mathbb E\|\mu_i-\mu_{\mathcal C}\|^2.
\]
The cost is positive precisely when one of the conditional means contains target-predictive information outside the intersection. This equality restriction is different from partitioning target coordinates within one context.

\section{Grounding and Stability beyond the Affine Comparison}
\label{app:scope_nonlinear}

The comparative theorem uses a common affine decoder.
This appendix states explicit conditions under which its reconstruction margin persists for a nonlinear decoder and inexact fitting.
It does not infer these conditions merely from the presence of grounding or EMA updates.

\subsection{A Uniform Nonlinear Extension}

Let $D$ be a fixed nonlinear decoder and $D_0(z)=Az+b$ an affine reference.
Let $\mathcal W$ be a common compact convex coefficient class containing the unconstrained affine optima $W_0,W_\Lambda$.
Both methods retain the same feature map and nonlinear decoder.
Define
\[
\mathcal F_D(W)=
\mathbb E\|D(W\phi)-\widetilde X\|^2
\]
and let $\mathcal J_j^D$ replace the affine field term in $\mathcal J_j^0$ by $\beta\mathcal F_D$, for $j\in\{0,\Lambda\}$.
Assume uniformly over $W\in\mathcal W$ that
\[
\|D(W\phi)-D_0(W\phi)\|_{L^2}\leq\varepsilon_D,
\qquad
\|D_0(W\phi)-\widetilde X\|_{L^2}\leq B_F.
\]
Put $q_D=2B_F\varepsilon_D+\varepsilon_D^2$.
For each objective, assume
$\beta A^\top A+\alpha\Lambda_{j,M}\succeq c_j I_r$,
with $c_j>0$ and $\Lambda_{0,M}=I_r$.

\begin{proposition}[Persistence of a reconstruction margin]
\label{prop:scope_nonlinear_margin}
Suppose $\widehat W_j\in\mathcal W$ satisfies
\[
\mathcal J_j^D(\widehat W_j)
\leq
\inf_{W\in\mathcal W}\mathcal J_j^D(W)+\eta_j.
\]
Define
\[
\zeta_j=
q_D+
2B_F\|A\|_{\mathrm{op}}
\sqrt{\frac{2\beta q_D+\eta_j}{c_j}}.
\]
Then, using the affine comparison quantities $\rho$ and $d_\Lambda$,
\[
\mathcal F_D(\widehat W_\Lambda)
-
\mathcal F_D(\widehat W_0)
\geq
d_\Lambda^2-2\rho d_\Lambda-\zeta_\Lambda-\zeta_0.
\]
A positive right-hand side is sufficient for a strict nonlinear field-risk advantage in this common class.
\end{proposition}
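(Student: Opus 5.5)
The plan is to compare each nonlinear near-minimizer $\widehat W_j$ with the corresponding affine optimum $W_j$ ($W_0$ or $W_\Lambda$). We show that $\mathcal F_D(\widehat W_j)$ differs from $\mathcal F_0(W_j)$ by at most $\zeta_j$, and then invoke Theorem~\ref{thm:scope_weighting}. Concretely, we write
\[
\mathcal F_D(\widehat W_\Lambda)-\mathcal F_D(\widehat W_0)
=\bigl[\mathcal F_0(W_\Lambda)-\mathcal F_0(W_0)\bigr]
+\bigl[\mathcal F_D(\widehat W_\Lambda)-\mathcal F_0(W_\Lambda)\bigr]
-\bigl[\mathcal F_D(\widehat W_0)-\mathcal F_0(W_0)\bigr].
\]
Theorem~\ref{thm:scope_weighting} bounds the first bracket below by $d_\Lambda^2-2\rho d_\Lambda$. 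It then suffices to prove the two-sided estimate $|\mathcal F_D(\widehat W_j)-\mathcal F_0(W_j)|\le\zeta_j$ for $j\in\{0,\Lambda\}$.

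\textbf{Step 1 (uniform decoder perturbation).} For any $W\in\mathcal W$, expand $\|D(W\phi)-\widetilde X\|^2$ around $D_0(W\phi)-\widetilde X$ and apply Cauchy--Schwarz with the two uniform bounds. This gives $|\mathcal F_D(W)-\mathcal F_0(W)|\le 2B_F\varepsilon_D+\varepsilon_D^2=q_D$. The latent term is identical in the two objectives, so $|\mathcal J_j^D(W)-\mathcal J_j^0(W)|\le\beta q_D$ uniformly on $\mathcal W$.

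\textbf{Step 2 (near-optimality transfers to the affine objective).} Since $W_j\in\mathcal W$, we have $\inf_{\mathcal W}\mathcal J_j^D\le \mathcal J_j^D(W_j)\le\mathcal J_j^0(W_j)+\beta q_D$. Combining this with Step 1 at $\widehat W_j$ gives $\mathcal J_j^0(\widehat W_j)-\mathcal J_j^0(W_j)\le 2\beta q_D+\eta_j$. The affine objective $\mathcal J_j^0$ is an exact quadratic minimized at $W_j$, with Hessian form $\mathcal T_{\Lambda_j}$ as in the existence argument of Appendix~\ref{app:scope_weighting}. Therefore the gap equals $\mathbb E[(\Delta_j\phi)^\top K_{j,M}(\Delta_j\phi)]\ge c_j\|\Delta_j\phi\|_{L^2}^2$, where $\Delta_j=\widehat W_j-W_j$. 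Hence $\|\Delta_j\phi\|_{L^2}\le\sqrt{(2\beta q_D+\eta_j)/c_j}$.

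\textbf{Step 3 (field-risk transfer).} We write $\mathcal F_0(\widehat W_j)-\mathcal F_0(W_j)=\mathbb E\|A\Delta_j\phi\|^2+2\mathbb E\langle A\Delta_j\phi,D_0(W_j\phi)-\widetilde X\rangle$. For the linear term, Cauchy--Schwarz with $\|D_0(W_j\phi)-\widetilde X\|_{L^2}\le B_F$ gives a magnitude of at most $2B_F\|A\|_{\mathrm{op}}\|\Delta_j\phi\|_{L^2}$. Adding the $q_D$ from Step 1 reproduces $\zeta_j$ exactly.

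\textbf{Main obstacle: the quadratic term.} The term $\mathbb E\|A\Delta_j\phi\|^2$ does not appear in $\zeta_j$.
\begin{itemize}
\item For $j=\Lambda$ it is nonnegative. It only helps the lower bound on $\mathcal F_D(\widehat W_\Lambda)$, so it can be dropped.
\item For $j=0$ we need an \emph{upper} bound on $\mathcal F_D(\widehat W_0)$. We would avoid the issue by expanding around $\widehat W_0$ instead: $\mathcal F_0(W_0)-\mathcal F_0(\widehat W_0)=\mathbb E\|A\Delta_0\phi\|^2-2\mathbb E\langle A\Delta_0\phi,D_0(\widehat W_0\phi)-\widetilde X\rangle$. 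This is at least the negative of the linear bound, because $\widehat W_0\in\mathcal W$ also satisfies the $B_F$ bound.
\end{itemize}
Thus each required one-sided inequality holds with a nonnegative quadratic term that can be discarded, and the claimed bound follows by combining the three brackets. If this sign bookkeeping fails in some case, the fallback is to absorb $\|A\|_{\mathrm{op}}^2(2\beta q_D+\eta_j)/c_j$ into $\zeta_j$. I expect the one-sided expansions to suffice, though.
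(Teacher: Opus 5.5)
Your proposal is correct and follows the paper's proof step for step: the uniform $q_D$ perturbation, transferring near-optimality to $\mathcal J_j^0$ with slack $2\beta q_D+\eta_j$, coercivity to bound $\|(\widehat W_j-W_j)\phi\|_{L^2}$, a field-risk transfer, and then Theorem~\ref{thm:scope_weighting}. The paper handles the quadratic term you flag with the factorization $\|u\|^2-\|v\|^2=\langle u-v,u+v\rangle$, using that both affine residuals (at $\widehat W_j$ and $W_j$, both in $\mathcal W$) have $L^2$ norm at most $B_F$; this gives the same two-sided bound as your pair of one-sided expansions, so your fallback is never needed.
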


\begin{proof}
Expanding squared field errors gives
\[
|\mathcal F_D(W)-\mathcal F_0(W)|
\leq q_D
\]
uniformly on $\mathcal W$.
The latent terms are unchanged, so the corresponding objectives differ by at most $\beta q_D$.
Comparing the returned predictor to the affine optimum gives
\[
\mathcal J_j^0(\widehat W_j)-\mathcal J_j^0(W_j)
\leq2\beta q_D+\eta_j.
\]
The affine objective is quadratic with vanishing first derivative at $W_j$.
Its coercivity therefore implies
\[
\mathbb E\|(\widehat W_j-W_j)\phi\|^2
\leq
\frac{2\beta q_D+\eta_j}{c_j}.
\]
The two affine field residuals have $L^2$ norms at most $B_F$.
Their squared-norm difference is consequently bounded by
\[
|\mathcal F_0(\widehat W_j)-\mathcal F_0(W_j)|
\leq
2B_F\|A\|_{\mathrm{op}}
\sqrt{\frac{2\beta q_D+\eta_j}{c_j}}.
\]
Adding the nonlinear-approximation error gives
$|\mathcal F_D(\widehat W_j)-\mathcal F_0(W_j)|\leq\zeta_j$.
Subtract the two comparisons and apply
Theorem~\ref{thm:scope_weighting}.
\end{proof}

A sufficient approximation condition follows from a $\kappa$-Lipschitz decoder Jacobian on a convex domain containing the relevant code segments.
For
$D_0(z)=D(z_\star)+J_D(z_\star)(z-z_\star)$,
Taylor's theorem gives
\[
\varepsilon_D
\leq
\frac{\kappa}{2}
\sup_{W\in\mathcal W}
\left\{\mathbb E\|W\phi-z_\star\|^4\right\}^{1/2},
\]
when the right-hand side is finite.
Smoothness alone does not make this bound small.
A pointwise approximation at a single code is insufficient for the uniform comparison.

The fitting tolerances above are population quantities.
If an empirical optimization tolerance is $\widehat\eta_j$ and a valid uniform objective-deviation bound is $u_j$, then
$\eta_j\leq\widehat\eta_j+2u_j$.
A small training objective alone supplies no such deviation bound.

\subsection{Online Grounding, Teacher Mismatch, and Moving Targets}

Let $Z_o=E_\theta(Y_F)$ and $Z=T(\widetilde X)=\bar E_{\bar\theta}(Y_F)$.
Define
\[
\rho_o=\|D(Z_o)-\widetilde X\|_{L^2},
\qquad
\Delta_{\mathrm{EMA}}=\|Z-Z_o\|_{L^2}.
\]
If $D$ is $L_D$-Lipschitz on the relevant code domain and the quantities are finite, the triangle inequality gives
\[
\|D(Z)-\widetilde X\|_{L^2}
\leq\rho_o+L_D\Delta_{\mathrm{EMA}}.
\]
For the affine-reference residual in
Theorem~\ref{thm:scope_weighting},
\[
\rho
\leq
\rho_o+L_D\Delta_{\mathrm{EMA}}
+\|D(Z)-D_0(Z)\|_{L^2}.
\]
Thus online grounding, teacher compatibility, and affine approximation contribute separately.
The method directly trains the first reconstruction pathway, not all three quantities.

The common latent rule is also distinct from a fixed numerical target over training.
For two teacher snapshots $Z_s,Z_t$ and the same prediction $f$, set
$\Delta_T=\|Z_t-Z_s\|_{L^2}$.
Expanding the squared losses gives
\[
\left|
\mathbb E\|f-Z_t\|^2-\mathbb E\|f-Z_s\|^2
\right|
\leq
2\|f-Z_s\|_{L^2}\Delta_T+\Delta_T^2.
\]
The coordinate support and weights can remain unchanged even while target values move.
This inequality does not bound the whole training trajectory, which also changes the predictor and decoder.

\subsection{Scope of the Theoretical Conclusions}

The quadratic reference in Section~\ref{sec:lossaware} separates observation uncertainty, decoder compatibility, and predictor excess risk at a fixed teacher--decoder snapshot. The target-weighting comparison additionally fixes common features and an affine decoder. Appendix~\ref{app:scope_nonlinear} extends the comparison under explicit nonlinear-approximation and fitting conditions. The reference coefficients are fixed analytical quantities, distinct from the dynamically scaled unsquared physical field objective used in training.

The physical transfer analysis below applies to a fixed backbone and compares complete-view and context-predicted representations through a decoder. Its inequalities provide sufficient conditions for DP transfer. SP instead trains the replacement decoder directly on context representations. The empirical decoder comparisons use the Full backbone; the mathematical claims retain the assumptions stated for each result.

\subsection{Physical Readout Transfer}
\label{app:scope_readout}

Fix a backbone, a field/observation law, and a nonzero target channel $y$. Let $R_F(d)$, $R_M(d)$, and $B_M(d)$ be defined as in Section~\ref{sec:lossaware}, with finite expectations. For each record, the reverse triangle inequality gives
\[
\left|\frac{\|d(z_M)-y\|_2}{\|y\|_2}-\frac{\|d(z_F)-y\|_2}{\|y\|_2}\right|
\leq\frac{\|d(z_M)-d(z_F)\|_2}{\|y\|_2}.
\]
Taking expectations yields $|R_M(d)-R_F(d)|\leq B_M(d)$. Combining $R_M(d_0)\geq R_F(d_0)-B_M(d_0)$ with $R_M(d_1)\leq R_F(d_1)+B_M(d_1)$ proves Eq.~\eqref{eq:scope_minimal_readout_transfer}. The triangle inequality also gives $B_M(d_0)\leq R_M(d_0)+R_F(d_0)$ for the shared decoder.

For the latent bound, write $\|v\|_r=\|v\|_2/\sqrt r$. If $d$ is $L_d$-Lipschitz from this norm to physical field norm and $\kappa_y=(\mathbb E\|y\|_2^{-2})^{1/2}<\infty$, then
\[
B_M(d)\leq L_d\mathbb E\frac{\|z_M-z_F\|_r}{\|y\|_2}
\leq L_d\kappa_y\left(\mathbb E\|z_M-z_F\|_r^2\right)^{1/2}.
\]
Minkowski's inequality gives
\[
\left(\mathbb E\|z_M-z_F\|_r^2\right)^{1/2}
\leq\left(\mathbb E\|z_M-\bar z\|_r^2\right)^{1/2}
+\left(\mathbb E\|\bar z-z_F\|_r^2\right)^{1/2}
=\sqrt J+\delta_{\mathrm{EMA}}.
\]
Substitution proves Eq.~\eqref{eq:scope_minimal_jepa_transfer}. The argument applies channelwise or to a declared nonzero-target evaluation cohort, subject to the stated moment and Lipschitz assumptions.

\section{Experimental Settings}
\label{app:implementation}

\subsection{Experiment groups}
\label{app:training-campaigns}

We distinguish SCOPE backbone training, the mask-aware operators in the main comparison, and the auxiliary FNO/DeepONet pairs with additional reconstruction supervision. Table~\ref{tab:training-campaigns} records their model sizes, objectives, observations, and optimization. SCOPE backbone comparisons use Full, FO, and FJV before decoder adaptation. All reported DP and SP comparisons start from the Full backbone. Operator configurations are specified in Appendix~\ref{app:controls}.

\begin{table}[t]
\centering
\caption{\textbf{Training settings by experiment group.} The main-table operators share SCOPE's observation/task protocol and physical field objective. The auxiliary operator pairs are a separate experiment.}
\label{tab:training-campaigns}
\begingroup
\color{black}
\footnotesize
\setlength{\tabcolsep}{5pt}
\renewcommand{\arraystretch}{1.16}
\begin{tabular}{@{}L{.15\linewidth}L{.265\linewidth}L{.265\linewidth}L{.24\linewidth}@{}}
\toprule
\rowcolor{ScopeTint}
Setting & SCOPE backbones & Main-table operators & Auxiliary operators \\
\midrule
Models & Full, FO, FJV & FNO, DeepONet, CNO, Transolver & FNO/DeepONet, each with and without $G$ \\
Inputs / outputs & Four masked-field/mask inputs; two fields & Six inputs including coordinates; two fields & Mask-aware inputs; one hidden-field head \\
\scopepanel{4}{Data exposure and observation contexts}
Pretraining & 10 epochs; none for FO & None & None \\
Main training & 500 epochs & 100 epochs & 100 epochs \\
Batch size & 32 & 125 & 32 \\
Task schedule & Forward/inverse/joint: $40/40/20$ & Same as SCOPE & Alternating forward/inverse: $50/50$ \\
Observation & Five balanced families; exact budgets in Table~\ref{tab:masks} & Same families, budgets, and homogeneous batches & Per-pixel Bernoulli $p=0.03$; masks refresh every five epochs \\
\scopepanel{4}{Objectives and optimization}
Field objective & Both fields; unsquared physical relative $L_2$ & Same two-field objective & Hidden field only; channel-normalized $L_1$ \\
Auxiliary terms & Recipe-dependent $J,G,V$ with gradient cap & None & Optional $0.25\,G$ using the same $L_1$ \\
Optimizer & AdamW; decay 0.05 & AdamW; decay 0.05 & AdamW; decay $10^{-5}$ \\
Learning rate & Peak $1.25\times10^{-4}$; warmup/cosine to $10^{-6}$ & Peak $1.25\times10^{-4}$; warmup/cosine & Constant $10^{-4}$ \\
Warmup & Five epochs per stage & Five epochs & None \\
Gradient clip & Norm 1 & Norm 1 & Norm 1 \\
Forward precision & BF16; losses float32 & BF16; field scores float32 & Float32; TF32 disabled \\
\bottomrule
\end{tabular}
\tabnote{The columns specify distinct model sizes and training budgets. Full-backbone decoder adaptation uses 100 epochs, batch 32, five-epoch warmup, the SCOPE optimizer, and alternating uniform/500 forward/inverse contexts. Only the new 5M/10M/15M head is trainable; its objective is field loss alone, with no teacher or auxiliary forward pass.}
\endgroup
\end{table}

\subsection{Data, splits, and physical normalization}
\label{app:data}

We use the normalized paired fields in \href{https://huggingface.co/datasets/jcy20/DiffusionPDE-normalized}{\texttt{jcy20/DiffusionPDE-normalized}}, at revision \texttt{3ededc4f2d8a1592a52a7a865cb903dab7647820}. Each float32 record has shape $2\times128\times128$. The released training and test splits are retained. Training uses shuffled training records; held-out records do not enter backbone optimization or decoder adaptation. Reported SCOPE results use terminal checkpoints without validation-based checkpoint selection.

\begin{table}[t]
\centering
\caption{\textbf{Data and training exposure.} An epoch is one training-set-equivalent number of draws. Main training and decoder adaptation use 500 and 100 epochs, respectively. Full and FJV additionally use complete-view pretraining.}
\label{tab:data}
\begingroup
\small
\renewcommand{\arraystretch}{1.15}
\begin{tabular}{@{}lrrrr@{}}
\toprule
\rowcolor{ScopeTint}
PDE & Train pairs & Test pairs & Main draws & Adaptation draws \\
\midrule
Darcy & 50,000 & 10,000 & 25,000,000 & 5,000,000 \\
Poisson & 50,000 & 1,024 & 25,000,000 & 5,000,000 \\
Helmholtz & 50,000 & 10,000 & 25,000,000 & 5,000,000 \\
NS & 50,000 & 1,000 & 25,000,000 & 5,000,000 \\
NS (BCs) & 14,000 & 1,000 & 7,000,000 & 1,400,000 \\
\bottomrule
\end{tabular}
\endgroup
\end{table}

Darcy pairs permeability $a$ with pressure $u$. Poisson and Helmholtz use source/response pairs. Obstacle-free NS uses paired vorticity snapshots, whereas cylinder NS uses earlier/later scalar speed snapshots. Physical values are recovered from the stored representation by
\begin{equation}
x_c=2\sigma_c\widetilde x_c+\mu_c.
\label{eq:normalization}
\end{equation}
SCOPE and the main-comparison operators evaluate field errors after this inverse normalization. Missing inputs are zero-filled in stored coordinates and accompanied by explicit masks. The auxiliary operators' normalization and scoring conventions are specified separately in Appendix~\ref{app:auxiliary-operators}.

\begin{table}[t]
\centering
\caption{\textbf{Normalization constants.} Physical values use the factor of two in Eq.~\eqref{eq:normalization}.}
\label{tab:normalization}
\begingroup
\small
\setlength{\tabcolsep}{6pt}
\renewcommand{\arraystretch}{1.15}
\begin{tabular}{@{}lrrrr@{}}
\toprule
\rowcolor{ScopeTint}
PDE & $\mu_a$ & $\mu_u$ & $\sigma_a$ & $\sigma_u$ \\
\midrule
Darcy & 7.5 & 0.00569201936 & 4.5 & 0.00379030361 \\
Poisson & 0 & $9.67226952\!\times\!10^{-6}$ & 0.2919494 & 0.00417478 \\
Helmholtz & 0 & $1.05050595\!\times\!10^{-5}$ & 0.2844538 & 0.00428004 \\
NS & 0 & 0 & 0.26211293 & 0.2561266 \\
NS (BCs) & 1.80715032 & 2.87310427 & 1.00997056 & 1.72194188 \\
\bottomrule
\end{tabular}
\endgroup
\end{table}

\subsection{Observation masks and batch construction}
\label{app:masks}

The input is $[M_a\odot\widetilde a,M_a,M_u\odot\widetilde u,M_u]$, where one denotes an observed location. Forward observes $a$, inverse observes $u$, and joint recovery observes both, with the cylinder convention in Appendix~\ref{app:boundary}. Joint masks have independently seeded locations for the two channels and the same nominal budget. Mask locations depend on geometry and sampling seeds rather than hidden field values.

Each batch shares one task, mask family, and budget, while sample-specific locations may differ. The schedule visits each of the five families once per five batches. Within each family, forward/inverse/joint tasks occur in proportions $40/40/20$. Uniform, regular-grid, and cluster families use the budget slots
\[
(500,500,1024,2048,4096,8192,12288,16384),
\]
so the 500-point condition has conditional weight $1/4$. Lines and blocks use 4,915 and 9,830 points, obtained by rounding $30\%$ and $60\%$ of the grid. Families have equal overall weight. Each training-set-sized sequence of draws is a fresh permutation; batch construction retains tail records and can cross an epoch boundary without mixing observation conditions.

\begin{table}[t]
\centering
\caption{\textbf{Backbone observation families.} Budgets count unique original-grid pixels before the cylinder union.}
\label{tab:masks}
\begingroup
\small
\renewcommand{\arraystretch}{1.17}
\begin{tabular}{@{}L{.17\linewidth}L{.77\linewidth}@{}}
\toprule
\rowcolor{ScopeTint}
Family & Construction \\
\midrule
Uniform & Uniform subset without replacement and with exact cardinality. \\
Regular grid & Cartesian original pixels with random phase and axis swap; no interpolation. Shapes are $20\!\times\!25$, $32\!\times\!32$, $32\!\times\!64$, $64\!\times\!64$, $64\!\times\!128$, $96\!\times\!128$, and $128\!\times\!128$. \\
Cluster & 2--4 equally weighted Gaussian components, center separation at least 0.30 domain units, and widths 0.12--0.20; weighted sampling without replacement. \\
Random lines & Whole rows or columns in random order, with at most one partial final line to meet the exact count. \\
Block & Rectangle-ordered pixels about a center in $[0.2,0.8]^2$, aspect ratio in $[2/3,3/2]$, and random tie breaking. \\
\bottomrule
\end{tabular}
\endgroup
\end{table}

Training and test mask seeds are 20260913 and 20261013. Hash-derived seeds distinguish records and conditions. Uniform/500 means 500 unique whole-grid samples before geometric conditioning, or approximately $3.05\%$ nominal coverage. Different sampled budgets need not produce nested masks.

\subsection{Known-cylinder conditioning}
\label{app:boundary}

Released per-record geometry specifies a closed disk whose values are supplied on both channels, including the otherwise hidden task channel. The active channel's sampled mask is unioned with the disk. No additional outer-boundary observations are introduced. Disk values are the released field values, rather than values inferred from the fields or overwritten with zero. Thus the nominal whole-grid sample budget and the number of visible fluid locations are distinct.

The additional \texttt{benchmark\_uniform/100} condition independently observes each fluid pixel with probability $0.01$ and supplies the same known disk. The identifier 100 denotes basis points, not a fixed number of sensors. This is a held-out acquisition condition for the adapted models, whose training uses uniform/500.

\subsection{Architecture and parameter accounting}
\label{app:architecture}

An $8\times8$ strided embedding maps the four input channels to 256 tokens. The encoder uses 12 pre-normalized Transformer blocks with width 512, 8 heads, GELU activations, MLP ratio 4, learned absolute positions, and no dropout. A linear head maps each token to width 128, yielding $16\times16\times128$ latent coordinates. Pooling is the identity because the patch and anchor grids coincide.

The conditioner is a $2\rightarrow128$ linear map applied to patchwise visibility fractions. The predictor projects width 128 to 256, applies two Transformer blocks with 8 heads and MLP ratio 4, and projects back to 128. The native decoder applies a tokenwise $128\rightarrow256\rightarrow256\rightarrow128$ MLP, forms two $8\times8$ field tiles per token, and uses residual refinement with three $3\times3$ convolutional layers of width 64. Hidden layers use GELU. No observed-field skip connection bypasses the latent representation.

Replacement decoders use four linear layers, tiled reconstruction, and five convolutional layers with residual refinement. Their MLP/convolution widths are $(1248,240)$, $(1760,352)$, and $(2168,432)$ for the 5M, 10M, and 15M heads. SCOPE--SP in the main text denotes Full with SP-10M, used consistently across PDE settings. The other sizes are decoder-capacity comparisons.

\begin{table}[t]
\centering
\caption{\textbf{Parameter accounting.} Deployment excludes the EMA teacher. Decoder adaptation updates only the replacement head.}
\label{tab:model}
\begingroup
\small
\renewcommand{\arraystretch}{1.15}
\begin{tabular}{@{}lr@{}}
\toprule
\rowcolor{ScopeTint}
Component & Parameters \\
\midrule
Online encoder & 38,157,952 \\
Predictor & 1,711,488 \\
Mask conditioner & 384 \\
Native decoder & 171,010 \\
\rowcolor{ScopeGray}
Native online model & 40,040,834 \\
EMA teacher, training-only & 38,157,952 \\
\midrule
Frozen inference backbone & 39,869,824 \\
SP-5M decoder & 5,003,170 \\
SP-5M deployment total & 44,872,994 \\
SP-10M decoder & 10,010,658 \\
\rowcolor{ScopeTint}
SP-10M deployment total & 49,880,482 \\
SP-15M decoder & 15,018,218 \\
SP-15M deployment total & 54,888,042 \\
\bottomrule
\end{tabular}
\endgroup
\end{table}

The stored frozen state can also contain the unused teacher, giving 78,027,776 non-decoder parameters. That stored-state count is distinct from the deployed model: the teacher is not evaluated during decoder adaptation or inference.

\subsection{Optimization, initialization, and backbone recipes}
\label{app:training}

SCOPE stages use batch 32 and AdamW with $(\beta_1,\beta_2)=(0.9,0.999)$, $\epsilon=10^{-8}$, peak learning rate $1.25\times10^{-4}$, and weight decay 0.05. Biases, one-dimensional parameters, and positions receive no decay. Each stage has five warmup epochs starting at $10^{-6}$, followed by cosine decay ending at $10^{-6}$. Schedules follow cumulative data draws. Combined gradients are clipped to norm 1. Forward computation uses BF16 mixed precision and losses use float32. The training seed is 20260913.

Full and FJV initialize the encoder and decoder with 10 epochs of complete-pair reconstruction plus the variance regularizer. The predictor and conditioner enter joint training at their random initialization. Main training lasts 500 epochs, resets the optimizer, and initializes the EMA teacher from the pretrained online encoder. EMA momentum increases from 0.996 to 0.9999. FO trains the same online architecture from random initialization for 500 epochs using field loss only, without pretraining or a teacher. Full and FJV differ in the presence of grounding during main training; FJV retains the same complete-view pretraining objective.

\subsection{Losses and auxiliary-gradient control}
\label{app:calibration}

The physical field objective averages the unsquared relative $L_2$ error over both channels, all spatial locations, and the batch. Grounding evaluates this same physical objective on $d_\omega(z_F)$, including inverse normalization. Latent prediction is mean squared coordinate error between $z_M$ and the stop-gradient EMA target. The variance term is
\[
\mathcal L_V=\frac1r\sum_{j=1}^{r}[0.1-s_j(z_F)]_+^2,
\]
where $s_j$ is the batch standard deviation with divisor $B$. The main-text coefficients are $\lambda_G=0.25$, $\lambda_V=0.01$, and $\gamma=0.1$. The backbone objectives are
\[
\begin{aligned}
\mathcal L_{\mathrm{Full}}&=\mathcal L_F+\alpha_t(\mathcal L_J+0.25\mathcal L_G+0.01\mathcal L_V),\\
\mathcal L_{\mathrm{FJV}}&=\mathcal L_F+\alpha_t(\mathcal L_J+0.01\mathcal L_V),\\
\mathcal L_{\mathrm{FO}}&=\mathcal L_F.
\end{aligned}
\]
For $S$ equal to all online parameters or encoder parameters alone, write
\[
f_S=\|\nabla_S\mathcal L_F\|_2,\qquad
a_S=\sum_{k\in\{J,G,V\}:w_k>0}\|w_k\nabla_S\mathcal L_k\|_2,
\qquad
\alpha_t=\min\left(1,\min_{S:a_S>0}\frac{f_S}{a_S}\right).
\]
Here $(w_J,w_G,w_V)=(1,0.25,0.01)$ for Full and $(1,0,0.01)$ for FJV. An empty inner minimum imposes no cap. A binding factor is multiplied by $1-10^{-6}$ for roundoff safety. The scalar is detached, the weighted gradients are composed, and clipping precedes AdamW. This controls the field term's share relative to the sum of component-gradient norms, both globally and on the encoder. The cap is recomputed for each recipe, so omitting grounding can also change the overall auxiliary scale.

\subsection{Full-backbone SP adaptation and DP control}
\label{app:adaptation}

Every reported decoder comparison starts from the Full epoch-500 backbone. Its native decoder is replaced with a newly initialized head. The encoder, predictor, and conditioner are frozen, and only the new decoder receives gradients. Adaptation lasts 100 epochs with a fresh optimizer and the schedule above. No teacher forward pass or latent, grounding, or variance loss is used.

SP computes detached context representations $z_M$ with the frozen inference pathway. Batches alternate forward and inverse tasks with uniform/500 observations, retaining known-cylinder conditioning. The field objective supervises both output channels. DP instead trains the head on detached complete-pair online representations $z_F$, bypassing the predictor and conditioner. Both are evaluated on context representations $z_M$. Within a decoder size, DP and SP share architecture and optimization settings but use different decoder-training representations.

The terminal epoch-100 heads are evaluated using the same held-out masks as their Full base. Adaptation masks use seed 20260913 in the training namespace; test masks use 20261013 in the test namespace. Test records do not enter adaptation. The common task, family, and nominal budget do not imply reuse of training examples or masks at evaluation.
\section{Evaluation Metrics and Comparison Protocols}
\label{app:metrics}

\subsection{Physical errors and Darcy classification}

For channel $c$, we report the mean per-record physical relative error
\begin{equation}
\operatorname{RelL2}_c=\frac{100}{|\mathcal I_c|}\sum_{i\in\mathcal I_c}\frac{\|\widehat x_{ic}-x_{ic}\|_2}{\|x_{ic}\|_2},
\qquad \mathcal I_c=\{i:\|x_{ic}\|_2>0\}.
\label{eq:rel-l2}
\end{equation}
Each norm flattens one spatial field. Forward evaluation selects $u$ and inverse evaluation selects $a$. Both output channels are also reported separately in Table~\ref{tab:all-channels}. These endpoint means are distinct from mixed-task, two-channel training losses.

Darcy permeability takes physical values 3 and 12. Predictions are thresholded at their midpoint, 7.5, to obtain binary labels. BER is the mean per-record fraction of misclassified pixels, reported as a percentage; pixel accuracy is $100-\mathrm{BER}$. Darcy inverse uses BER throughout the primary comparisons. Continuous relative $L_2$ errors on Darcy $a$ are explicitly labeled auxiliary diagnostics.

\subsection{Zero-energy cylinder targets and regional evaluation}
\label{app:zero}

The cylinder data contain 76 zero-$u$ training fields and three zero-$u$ test fields, with zero-based test indices 39, 163, and 710. No $a$ targets have zero energy. Test relative-error means use the 997 nonzero $u$ fields and all 1,000 $a$ fields. Zero-energy targets are not assigned a zero relative error or evaluated with an arbitrary denominator. The tables report this defined-target relative-error endpoint; absolute-RMSE summaries for the zero-target subset are not included.

Training retains all records. For a zero target, the channel loss is
\[
\ell_{ic}^{(0)}=\frac{\operatorname{RMSE}(\widehat x_{ic},0)}{s_{c,\mathrm{train}}},
\qquad
(s_{a,\mathrm{train}},s_{u,\mathrm{train}})=(2.070225306780491,3.3495987840120813).
\]
The zero-target flag is determined from the target data. Nonzero targets use their own physical norms. Regional diagnostics distinguish the whole grid, fluid, a fluid-side band defined by a $5\times5$ dilation of the disk intersected with fluid, and the remaining fluid interior. Regional errors use their own target norms and validity masks and are separate from the primary whole-grid endpoint.

\subsection{Numerical provenance and comparison groups}
\label{app:baseline}

The literature rows in Table~\ref{tab:main} retain the source-specific data, observation, and evaluation protocols. Sampling-step labels describe inference iterations. The literature-reported FNO, PINO, DeepONet, PINN, DiffusionPDE, and FunDPS entries are transcribed from Table~1 of \citet{yao2025fundps}; the DiffusionPDE row is their reproduced evaluation. Architecture citations identify the underlying methods rather than the source of every measurement.

PRISMA-50 uses the Darcy results in Table~10 of version~2 of \citet{sawhney2025prisma}. Equi-FunDPS uses Table~5 of arXiv version~3 of \citet{tolooshams2025equireg}, with 500 sampling steps and 100 test samples. PISD uses the 500-observation results in Tables~1--2 of \citet{gallon2026pisd}. DDIS uses the highest-runtime DDIS configuration in Table~2 of \citet{lin2026ddis}. These literature entries are excluded from the highlighted common-observation ranking and paired tests.

Our main-table mask-aware operators use the observation/task protocol and physical field objective specified in Appendix~\ref{app:implementation}, with their architecture sizes and budgets listed in Table~\ref{tab:training-campaigns}. The auxiliary FNO/DeepONet pairs use a separate Bernoulli-observation experiment and are not substituted for the main-table models. Their test populations, additional reconstruction objective, and metric units are given in Appendix~\ref{app:auxiliary-operators}.

\subsection{Decoder training and deployment cost}
\label{app:runtime}

SP-10M trains 10,010,658 decoder parameters, approximately $20.07\%$ of its 49,880,482 deployed parameters. The corresponding fractions are $11.15\%$ for SP-5M and $27.36\%$ for SP-15M. Each adaptation adds 100 epochs and uses a frozen-backbone forward pass to obtain context representations. Deployed totals include the replacement decoder and exclude the EMA teacher. Native and adapted inference each use a deterministic encoder--predictor--decoder pass. These counts report parameter and training exposure; no measured wall-clock speedup is claimed.

\section{Additional Held-Out Results}
\label{app:full-results}

\subsection{Backbone recipes before adaptation}

Table~\ref{tab:recipes} compares the native Full, FJV, and FO models. Full has lower mean error than FJV at all ten primary forward/inverse endpoints. Both use the same pretraining objective and duration; their main-stage objectives differ in the grounding term and the resulting auxiliary-gradient scale. FO is the direct field-supervised recipe described in Appendix~\ref{app:training}. Per-record statistics are provided in Appendix~\ref{app:record-statistics}.

\begin{table}[t]
\centering
\caption{\textbf{Backbone training recipes before decoder adaptation.} All rows use their own 171,010-parameter decoder at main epoch 500, the same online architecture, training data, mask schedule, and test conditions. Full is compared with field-only training (FO) and full-target prediction without main-stage grounding (FJV).}
\label{tab:recipes}
\begingroup
\color{black}
\small
\setlength{\tabcolsep}{3pt}
\renewcommand{\arraystretch}{1.12}
\resizebox{\linewidth}{!}{
\begin{tabular}{@{}l*{10}{r}@{}}
\toprule
\rowcolor{ScopeTint}
Method & \multicolumn{2}{c}{Darcy} & \multicolumn{2}{c}{Poisson} & \multicolumn{2}{c}{Helmholtz} & \multicolumn{2}{c}{NS} & \multicolumn{2}{c}{NS (BCs)$^\dagger$} \\
\cmidrule(lr){2-3}\cmidrule(lr){4-5}\cmidrule(lr){6-7}\cmidrule(lr){8-9}\cmidrule(lr){10-11}
& Fwd. & Inv. (BER) & Fwd. & Inv. & Fwd. & Inv. & Fwd. & Inv. & Fwd. & Inv. \\
\midrule
FO & 2.1625 & 1.5832 & 1.8322 & 11.2462 & 1.8306 & 10.7070 & 2.6269 & 7.3287 & 1.9829 & 0.7076 \\
FJV & 2.1788 & 1.6502 & 2.0647 & 11.2334 & 2.1258 & 11.6476 & 2.7015 & 7.5940 & 2.3897 & 0.8209 \\
\rowcolor{ScopeTint}
Full & 2.1737 & 1.5564 & 1.6804 & 9.5024 & 1.6361 & 8.7023 & 2.5813 & 6.9732 & 2.1148 & 0.6491 \\
\bottomrule
\end{tabular}}
\tabnote{Full and FJV include 10 full-view pretraining epochs; FO starts randomly without pretraining. Thus FO is a recipe control rather than an initialization-matched single-term ablation. Nominal auxiliary weights and the gradient cap are held fixed when their terms are present. Metrics and \(\dagger\) follow Table~\ref{tab:main}.}
\endgroup
\end{table}

\subsection{Decoder adaptation on the Full backbone}

Table~\ref{tab:adaptation} holds the Full encoder, predictor, and conditioner fixed and compares the native decoder with replacement heads. All replacement heads are newly initialized. SP-10M, denoted SCOPE--SP in the main text, lowers the Full model's mean error on each primary endpoint. DP and SP use complete-view and context-predicted representations, respectively, during decoder fitting; every reported sparse test uses the context pathway.

\begin{table}[t]
\centering
\caption{\textbf{Readout adaptation at a fixed Full backbone.} DP fits a new head on complete-view encoder latents; SP fits it on sparse predictor latents. Both use 100 additional epochs. Every row is tested through the sparse predictor path. Four decimal places expose small changes hidden by main-table rounding.}
\label{tab:adaptation}
\begingroup
\color{black}
\small
\setlength{\tabcolsep}{3pt}
\renewcommand{\arraystretch}{1.12}
\resizebox{\linewidth}{!}{
\begin{tabular}{@{}l*{10}{r}@{}}
\toprule
\rowcolor{ScopeTint}
Method & \multicolumn{2}{c}{Darcy} & \multicolumn{2}{c}{Poisson} & \multicolumn{2}{c}{Helmholtz} & \multicolumn{2}{c}{NS} & \multicolumn{2}{c}{NS (BCs)$^\dagger$} \\
\cmidrule(lr){2-3}\cmidrule(lr){4-5}\cmidrule(lr){6-7}\cmidrule(lr){8-9}\cmidrule(lr){10-11}
& Fwd. & Inv. (BER) & Fwd. & Inv. & Fwd. & Inv. & Fwd. & Inv. & Fwd. & Inv. \\
\midrule
Full / original head & 2.1737 & 1.5564 & 1.6804 & 9.5024 & 1.6361 & 8.7023 & 2.5813 & 6.9732 & 2.1148 & 0.6491 \\
Full / DP-5M & 2.2066 & 1.6425 & 1.6760 & 9.5222 & 1.6320 & 8.7066 & 2.5749 & 6.9797 & 2.0027 & 0.4789 \\
\rowcolor{ScopeTint}
Full / SP-5M & 2.1687 & 1.5509 & 1.6678 & 9.4277 & 1.6285 & 8.6436 & 2.5733 & 6.9402 & 1.9958 & 0.4773 \\
\rowcolor{ScopeTint}
Full / \textbf{SP-10M} & 2.1684 & 1.5503 & 1.6674 & 9.4260 & 1.6279 & 8.6425 & 2.5729 & 6.9384 & 1.9728 & 0.4524 \\
\bottomrule
\end{tabular}}
\tabnote{DP-5M and SP-5M use the same head architecture, base checkpoint, and optimizer schedule, but different decoder-training representations. Comparisons with the native decoder additionally change capacity and add 100 training epochs. SP-10M is the primary adapted model in Table~\ref{tab:main}; SP-5M and SP-15M provide capacity comparisons in Table~\ref{tab:capacity}. Metrics and $\dagger$ follow Table~\ref{tab:main}.}
\endgroup
\end{table}

\subsection{Both reconstructed channels}

Table~\ref{tab:all-channels} reports both output channels under uniform/500. Forward and inverse identify the channel evaluated in the primary comparison, while training supervises both fields. Darcy inverse BER and continuous relative $L_2$ are reported in separate columns. The SP rows here use the Full backbone and the 5M head; the full head-size comparison follows in Table~\ref{tab:capacity}.

\begingroup
\color{black}
\small
\setlength{\tabcolsep}{5pt}
\renewcommand{\arraystretch}{1.13}
\begin{longtable}{@{}llrrrrr@{}}
\caption{\textbf{Both reconstructed channels at uniform/500.} All entries are held-out percentages, not training losses. Inverse $a^\star$ uses BER for Darcy and relative $L_2$ elsewhere; all other channel errors use relative $L_2$. The final column retains Darcy inverse relative $L_2$ only as an auxiliary diagnostic. Cylinder $u$ means use 997 defined targets.}
\label{tab:all-channels}\\
\toprule
\rowcolor{ScopeTint}
PDE & Model / head & \multicolumn{2}{c}{Forward} & \multicolumn{2}{c}{Inverse} & Darcy aux. \\
\cmidrule(lr){3-4}\cmidrule(lr){5-6}
& & $a$ ($L_2$) & $u$ ($L_2$) & $a^\star$ & $u$ ($L_2$) & Inv. $a$ ($L_2$) \\
\midrule
\endfirsthead
\multicolumn{7}{l}{\tablename\ \thetable\ (continued)}\\
\toprule
\rowcolor{ScopeTint}
PDE & Model / head & \multicolumn{2}{c}{Forward} & \multicolumn{2}{c}{Inverse} & Darcy aux. \\
\cmidrule(lr){3-4}\cmidrule(lr){5-6}
& & $a$ ($L_2$) & $u$ ($L_2$) & $a^\star$ & $u$ ($L_2$) & Inv. $a$ ($L_2$) \\
\midrule
\endhead
\midrule
\multicolumn{7}{r}{Continued on next page}\\
\endfoot
\bottomrule
\endlastfoot
Darcy & FO & 15.6697 & 2.1625 & 1.5832 & 0.4097 & 10.7508 \\
Darcy & FJV & 15.7062 & 2.1788 & 1.6502 & 0.4502 & 10.9896 \\
Darcy & Full & 15.6970 & 2.1737 & 1.5564 & 0.4213 & 10.6789 \\
\rowcolor{ScopeTint}
Darcy & Full / SP-5M & 15.6806 & 2.1687 & 1.5509 & 0.4008 & 10.6478 \\
\addlinespace[4pt]
Poisson & FO & 8.5021 & 1.8322 & 11.2462 & 0.7304 & --- \\
Poisson & FJV & 9.2788 & 2.0647 & 11.2334 & 0.8510 & --- \\
Poisson & Full & 8.1587 & 1.6804 & 9.5024 & 0.4730 & --- \\
\rowcolor{ScopeTint}
Poisson & Full / SP-5M & 8.1318 & 1.6678 & 9.4277 & 0.4467 & --- \\
\addlinespace[4pt]
Helmholtz & FO & 8.5624 & 1.8306 & 10.7070 & 0.6712 & --- \\
Helmholtz & FJV & 8.9854 & 2.1258 & 11.6476 & 1.0201 & --- \\
Helmholtz & Full & 8.1235 & 1.6361 & 8.7023 & 0.3937 & --- \\
\rowcolor{ScopeTint}
Helmholtz & Full / SP-5M & 8.0989 & 1.6285 & 8.6436 & 0.3755 & --- \\
\addlinespace[4pt]
NS & FO & 7.4657 & 2.6269 & 7.3287 & 0.7303 & --- \\
NS & FJV & 7.6031 & 2.7015 & 7.5940 & 0.8488 & --- \\
NS & Full & 7.3645 & 2.5813 & 6.9732 & 0.6419 & --- \\
\rowcolor{ScopeTint}
NS & Full / SP-5M & 7.3632 & 2.5733 & 6.9402 & 0.6263 & --- \\
\addlinespace[4pt]
NS (BCs) & FO & 0.6847 & 1.9829 & 0.7076 & 1.4657 & --- \\
NS (BCs) & FJV & 0.8283 & 2.3897 & 0.8209 & 2.0319 & --- \\
NS (BCs) & Full & 0.6480 & 2.1148 & 0.6491 & 1.7350 & --- \\
\rowcolor{ScopeTint}
NS (BCs) & Full / SP-5M & 0.4749 & 1.9958 & 0.4773 & 1.6237 & --- \\
\addlinespace[4pt]
\end{longtable}
\endgroup

\subsection{Decoder capacity and training representations}

Table~\ref{tab:capacity} reports the Full-based DP and SP heads at each decoder size. All heads train for 100 epochs. On cylinder flow, the primary SP-10M configuration gives $1.9728\%$ forward and $0.4524\%$ inverse error, compared with $2.1148\%$ and $0.6491\%$ for the native decoder. A dash denotes an unreported channel-specific measurement. All available primary endpoint values are retained.

\begingroup
\color{black}
\small
\setlength{\tabcolsep}{5pt}
\renewcommand{\arraystretch}{1.13}
\begin{longtable}{@{}llrrrrr@{}}
\caption{\textbf{Decoder size and training-view sensitivity on the Full backbone.} All heads are newly initialized and trained for 100 epochs. Inverse $a^\star$ uses BER for Darcy and relative $L_2$ elsewhere; the final column retains Darcy inverse relative $L_2$ as auxiliary only. All values are percentages. A dash denotes an unreported channel-specific measurement.}
\label{tab:capacity}\\
\toprule
\rowcolor{ScopeTint}
PDE & Model / head & \multicolumn{2}{c}{Forward} & \multicolumn{2}{c}{Inverse} & Darcy aux. \\
\cmidrule(lr){3-4}\cmidrule(lr){5-6}
& & $a$ ($L_2$) & $u$ ($L_2$) & $a^\star$ & $u$ ($L_2$) & Inv. $a$ ($L_2$) \\
\midrule
\endfirsthead
\multicolumn{7}{l}{\tablename\ \thetable\ (continued)}\\
\toprule
\rowcolor{ScopeTint}
PDE & Model / head & \multicolumn{2}{c}{Forward} & \multicolumn{2}{c}{Inverse} & Darcy aux. \\
\cmidrule(lr){3-4}\cmidrule(lr){5-6}
& & $a$ ($L_2$) & $u$ ($L_2$) & $a^\star$ & $u$ ($L_2$) & Inv. $a$ ($L_2$) \\
\midrule
\endhead
\midrule
\multicolumn{7}{r}{Continued on next page}\\
\endfoot
\bottomrule
\endlastfoot
Darcy & DP-5M & 18.3474 & 2.2066 & 1.6425 & 0.4874 & 12.4596 \\
Darcy & DP-10M & 18.4497 & 2.2021 & 1.6468 & 0.4857 & 12.4984 \\
Darcy & DP-15M & 18.1494 & 2.2016 & 1.6316 & 0.4837 & 12.3332 \\
\rowcolor{ScopeTint}
Darcy & SP-5M & 15.6806 & 2.1687 & 1.5509 & 0.4008 & 10.6478 \\
\rowcolor{ScopeTint}
Darcy & SP-10M & 15.6781 & 2.1684 & 1.5503 & 0.3998 & 10.6458 \\
\rowcolor{ScopeTint}
Darcy & SP-15M & --- & 2.1683 & 1.5498 & --- & 10.6443 \\
\addlinespace[4pt]
Poisson & DP-5M & 8.1732 & 1.6760 & 9.5222 & 0.4630 & --- \\
Poisson & DP-10M & 8.1736 & 1.6753 & 9.5230 & 0.4611 & --- \\
Poisson & DP-15M & 8.1741 & 1.6756 & 9.5235 & 0.4624 & --- \\
\rowcolor{ScopeTint}
Poisson & SP-5M & 8.1318 & 1.6678 & 9.4277 & 0.4467 & --- \\
\rowcolor{ScopeTint}
Poisson & SP-10M & 8.1318 & 1.6674 & 9.4260 & 0.4453 & --- \\
\rowcolor{ScopeTint}
Poisson & SP-15M & --- & 1.6671 & 9.4249 & --- & --- \\
\addlinespace[4pt]
Helmholtz & DP-5M & 8.1268 & 1.6320 & 8.7066 & 0.3836 & --- \\
Helmholtz & DP-10M & 8.1267 & 1.6318 & 8.7067 & 0.3836 & --- \\
Helmholtz & DP-15M & 8.1268 & 1.6316 & 8.7068 & 0.3828 & --- \\
\rowcolor{ScopeTint}
Helmholtz & SP-5M & 8.0989 & 1.6285 & 8.6436 & 0.3755 & --- \\
\rowcolor{ScopeTint}
Helmholtz & SP-10M & 8.0984 & 1.6279 & 8.6425 & 0.3741 & --- \\
\rowcolor{ScopeTint}
Helmholtz & SP-15M & --- & 1.6277 & 8.6417 & --- & --- \\
\addlinespace[4pt]
NS & DP-5M & 7.3667 & 2.5749 & 6.9797 & 0.6302 & --- \\
NS & DP-10M & 7.3668 & 2.5748 & 6.9799 & 0.6293 & --- \\
NS & DP-15M & 7.3670 & 2.5751 & 6.9795 & 0.6294 & --- \\
\rowcolor{ScopeTint}
NS & SP-5M & 7.3632 & 2.5733 & 6.9402 & 0.6263 & --- \\
\rowcolor{ScopeTint}
NS & SP-10M & 7.3629 & 2.5729 & 6.9384 & 0.6256 & --- \\
\rowcolor{ScopeTint}
NS & SP-15M & --- & 2.5732 & 6.9383 & --- & --- \\
\addlinespace[4pt]
NS (BCs) & DP-5M & 0.4769 & 2.0027 & 0.4789 & 1.6318 & --- \\
NS (BCs) & DP-10M & 0.4559 & 1.9820 & 0.4576 & 1.6131 & --- \\
NS (BCs) & DP-15M & 0.4433 & 1.9784 & 0.4452 & 1.6077 & --- \\
\rowcolor{ScopeTint}
NS (BCs) & SP-5M & 0.4749 & 1.9958 & 0.4773 & 1.6237 & --- \\
\rowcolor{ScopeTint}
NS (BCs) & SP-10M & 0.4500 & 1.9728 & 0.4524 & 1.6025 & --- \\
\rowcolor{ScopeTint}
NS (BCs) & SP-15M & 0.4359 & 1.9637 & 0.4382 & 1.5927 & --- \\
\addlinespace[4pt]
\end{longtable}
\endgroup

\subsection{A changed cylinder observation condition}

Table~\ref{tab:cylinder-native} applies the same adapted heads to known-disk conditioning with Bernoulli observations of $1\%$ of the fluid region. The heads are trained under uniform/500 and evaluated here without further fitting. This acquisition law has a random fluid-sensor count and is specified in Appendix~\ref{app:boundary}.

\begin{table}[!h]
\centering
\caption{\textbf{A separate cylinder observation condition.} SP is trained only on uniform/500. These held-out rows use known-disk conditioning plus 1\% Bernoulli observations of the fluid region; ``100'' in the saved condition name denotes basis points, not 100 fixed sensors. Errors are relative $L_2$ (\%).}
\label{tab:cylinder-native}
\begingroup
\color{black}
\small
\setlength{\tabcolsep}{8pt}
\renewcommand{\arraystretch}{1.15}
\begin{tabular}{@{}lrrrr@{}}
\toprule
\rowcolor{ScopeTint}
Head & Forward $a$ & Forward $u$ & Inverse $a$ & Inverse $u$ \\
\midrule
\rowcolor{ScopeTint}
SP-5M & 1.0278 & 2.5322 & 0.7664 & 2.0122 \\
SP-10M & 1.0115 & 2.5174 & 0.7494 & 1.9952 \\
SP-15M & 1.0044 & 2.5137 & 0.7402 & 1.9895 \\
\bottomrule
\end{tabular}
\endgroup
\end{table}

\subsection{Reported evaluation conditions}

The backbone protocol supports 75 distinct task/family/budget combinations, plus two cylinder-specific acquisition conditions. The adapted-model results reported here cover forward and inverse uniform/500 recovery and the additional cylinder forward/inverse conditions. These reported conditions are distinct from the set of conditions supported by the observation generator.

\clearpage
\section{Per-Record Errors and Paired Comparisons}
\label{app:record-statistics}

Tables~\ref{tab:app-forward-u}--\ref{tab:app-darcy-ber} report the means and per-record standard deviations underlying Figure~\ref{fig:hidden-field-current}. The SCOPE groups contain native FO, FJV, and Full models and Full-based SP heads. Auxiliary operators form a separate experiment specified in Appendix~\ref{app:auxiliary-operators}. Means, units, and scored populations are identified for each group. All SP rows in this appendix use the Full backbone.

\begin{table}[!htbp]
\centering
\caption{\textbf{Forward recovery of the hidden response $u$.} Mean $\pm$ standard deviation of per-record relative $L_2$ errors (\%). SCOPE uses uniform/500 and physical units. Auxiliary operators use their own Bernoulli-$3\%$ protocol and the metric units specified below. These are not fully matched cross-family comparisons.}
\label{tab:app-forward-u}
\begingroup
\color{black}
\footnotesize
\setlength{\tabcolsep}{3pt}
\renewcommand{\arraystretch}{1.18}
\resizebox{\linewidth}{!}{
\begin{tabular}{@{}lccccc@{}}
\toprule
\rowcolor{ScopeTint}
Method & Poisson & Helmholtz & Darcy & NS (periodic) & NS (cylinder)$^\ddagger$ \\
\midrule
\scopepanel{6}{Operator references: 100 epochs; 3\% Bernoulli masks}
DeepONet + G & $3.4783\,\pm\,1.0679$ & $3.0379\,\pm\,1.1545$ & $2.5830\,\pm\,0.8954$ & $4.3536\,\pm\,0.8093$ & $13.9307\,\pm\,11.0177$ \\
DeepONet & $3.4458\,\pm\,1.2496$ & $2.3259\,\pm\,1.0340$ & $2.6641\,\pm\,0.8977$ & $4.4076\,\pm\,0.8204$ & $13.0824\,\pm\,9.6002$ \\
FNO + G & $3.1172\,\pm\,1.4950$ & $3.1115\,\pm\,1.5640$ & $2.8842\,\pm\,1.1010$ & $3.4010\,\pm\,0.8183$ & $15.1163\,\pm\,19.4803$ \\
FNO & $3.6179\,\pm\,1.6686$ & $4.1611\,\pm\,1.6845$ & $2.7522\,\pm\,1.0165$ & $3.0042\,\pm\,0.7315$ & $14.3843\,\pm\,17.5732$ \\
\scopepanel{6}{SCOPE backbone recipes: 500 main epochs; uniform/500}
FO & $1.8322\,\pm\,0.9686$ & $1.8306\,\pm\,0.9838$ & $2.1625\,\pm\,0.8977$ & $2.6269\,\pm\,0.6467$ & $1.9829\,\pm\,9.9739$ \\
FJV & $2.0647\,\pm\,1.0983$ & $2.1258\,\pm\,1.0436$ & $2.1788\,\pm\,0.8988$ & $2.7015\,\pm\,0.6595$ & $2.3897\,\pm\,10.8972$ \\
\rowcolor{ScopeGray}
Full & $1.6804\,\pm\,0.9213$ & $1.6361\,\pm\,0.9242$ & $2.1737\,\pm\,0.8969$ & $2.5813\,\pm\,0.6454$ & $2.1148\,\pm\,10.3418$ \\
\scopepanel{6}{Sparse-path adaptation: frozen Full backbone; 100 head epochs}
\rowcolor{ScopeTint}
SP-5M & $1.6678\,\pm\,0.9166$ & $1.6285\,\pm\,0.9244$ & $2.1687\,\pm\,0.8966$ & $2.5733\,\pm\,0.6473$ & $1.9958\,\pm\,10.4692$ \\
\rowcolor{ScopeTint}
SP-10M & $1.6674\,\pm\,0.9170$ & $1.6279\,\pm\,0.9244$ & $2.1684\,\pm\,0.8967$ & $2.5729\,\pm\,0.6471$ & $1.9728\,\pm\,10.4255$ \\
\rowcolor{ScopeTint}
SP-15M & $1.6671\,\pm\,0.9174$ & $1.6277\,\pm\,0.9245$ & $2.1683\,\pm\,0.8967$ & $2.5732\,\pm\,0.6473$ & $1.9637\,\pm\,10.4303$ \\
\bottomrule
\end{tabular}}
\tabnote{SCOPE uses 1,024 / 10,000 / 10,000 / 1,000 / 997 scored fields in column order and physical-unit errors. Each auxiliary operator uses 1,000 records with five mask draws, giving 5,000 record--mask pairs. SD denotes dispersion over scored rows, not independent training runs. Auxiliary Darcy errors are converted per record to physical units; periodic-NS relative errors are unchanged by zero-mean scaling. Auxiliary Poisson and Helmholtz forward errors retain stored coordinates and nonzero response shifts. $\ddagger$: cylinder auxiliary scores also retain stored coordinates and omit SCOPE's known-disk conditioning. The auxiliary norm guard excludes no records.}
\endgroup
\end{table}

\begin{table}[!htbp]
\centering
\caption{\textbf{Inverse recovery of the hidden input $a$.} Mean $\pm$ standard deviation of per-record target errors (\%): BER for Darcy, relative $L_2$ for the other PDEs. Operators use their own Bernoulli 3\% protocol; SCOPE uses exactly 500 uniform points. These are not fully matched cross-family comparisons.}
\label{tab:app-inverse-a}
\begingroup
\color{black}
\footnotesize
\setlength{\tabcolsep}{3pt}
\renewcommand{\arraystretch}{1.18}
\resizebox{\linewidth}{!}{
\begin{tabular}{@{}lccccc@{}}
\toprule
\rowcolor{ScopeTint}
Method & Poisson & Helmholtz & Darcy (BER) & NS (periodic) & NS (cylinder)$^\ddagger$ \\
\midrule
\scopepanel{6}{Operator references: 100 epochs; 3\% Bernoulli masks}
DeepONet + G & $12.7519\,\pm\,3.1407$ & $12.0210\,\pm\,2.7818$ & $5.8548\,\pm\,2.1429$ & $10.3008\,\pm\,1.7708$ & $13.4073\,\pm\,7.0904$ \\
DeepONet & $12.0413\,\pm\,3.0069$ & $11.6675\,\pm\,2.7691$ & $5.6359\,\pm\,2.0694$ & $10.3639\,\pm\,1.8348$ & $13.3365\,\pm\,7.0118$ \\
FNO + G & $15.0845\,\pm\,3.8692$ & $14.8467\,\pm\,3.7329$ & $3.4798\,\pm\,1.6474$ & $9.3371\,\pm\,1.6888$ & $11.6073\,\pm\,6.9658$ \\
FNO & $15.0947\,\pm\,3.9014$ & $14.4706\,\pm\,3.5408$ & $3.6251\,\pm\,1.7206$ & $7.5056\,\pm\,1.4283$ & $10.9079\,\pm\,6.6674$ \\
\scopepanel{6}{SCOPE backbone recipes: 500 main epochs; uniform/500}
FO & $11.2462\,\pm\,2.8743$ & $10.7070\,\pm\,2.7732$ & $1.5832\,\pm\,0.7883$ & $7.3287\,\pm\,1.3145$ & $0.7076\,\pm\,1.0292$ \\
FJV & $11.2334\,\pm\,2.9060$ & $11.6476\,\pm\,2.8693$ & $1.6502\,\pm\,0.8179$ & $7.5940\,\pm\,1.3622$ & $0.8209\,\pm\,1.0048$ \\
\rowcolor{ScopeGray}
Full & $9.5024\,\pm\,2.4330$ & $8.7023\,\pm\,2.2587$ & $1.5564\,\pm\,0.7748$ & $6.9732\,\pm\,1.2552$ & $0.6491\,\pm\,1.0402$ \\
\scopepanel{6}{Sparse-path adaptation: frozen Full backbone; 100 head epochs}
\rowcolor{ScopeTint}
SP-5M & $9.4277\,\pm\,2.4229$ & $8.6436\,\pm\,2.2466$ & $1.5509\,\pm\,0.7721$ & $6.9402\,\pm\,1.2549$ & $0.4773\,\pm\,1.0328$ \\
\rowcolor{ScopeTint}
SP-10M & $9.4260\,\pm\,2.4231$ & $8.6425\,\pm\,2.2464$ & $1.5503\,\pm\,0.7719$ & $6.9384\,\pm\,1.2550$ & $0.4524\,\pm\,1.0332$ \\
\rowcolor{ScopeTint}
SP-15M & $9.4249\,\pm\,2.4231$ & $8.6417\,\pm\,2.2462$ & $1.5498\,\pm\,0.7717$ & $6.9383\,\pm\,1.2550$ & $0.4382\,\pm\,1.0329$ \\
\bottomrule
\end{tabular}}
\tabnote{SCOPE uses 1,024 / 10,000 / 10,000 / 1,000 / 1,000 scored fields in column order. Auxiliary models use 1,000 records and five masks per record. Darcy uses measured BER; the other non-cylinder inverse relative errors have zero-mean normalizations and are scaling-invariant. $\ddagger$: cylinder auxiliary scores retain stored units and omit known-disk conditioning. SD summarizes scored records or record--mask pairs, not training-seed variation.}
\endgroup
\end{table}

\begin{table}[!htbp]
\centering
\caption{\textbf{Darcy inverse classification.} Mean $\pm$ per-record SD of binary error rate and mean pixel accuracy. SCOPE uses uniform/500 on 10,000 fields; operators use their separate 3\% protocol on 1,000 fields with five mask draws.}
\label{tab:app-darcy-ber}
\begingroup
\color{black}
\small
\renewcommand{\arraystretch}{1.16}
\begin{tabular}{@{}lrr@{}}
\toprule
\rowcolor{ScopeTint}
Method & BER (\%) $\downarrow$ & Pixel accuracy (\%) $\uparrow$ \\
\midrule
\scopepanel{3}{Operator references: own observation protocol}
DeepONet + G & $5.8548\,\pm\,2.1429$ & 94.1452 \\
DeepONet & $5.6359\,\pm\,2.0694$ & 94.3641 \\
FNO + G & $3.4798\,\pm\,1.6474$ & 96.5202 \\
FNO & $3.6251\,\pm\,1.7206$ & 96.3749 \\
\scopepanel{3}{SCOPE backbone recipes}
FO & $1.5832\,\pm\,0.7883$ & 98.4168 \\
FJV & $1.6502\,\pm\,0.8179$ & 98.3498 \\
Full & $1.5564\,\pm\,0.7748$ & 98.4436 \\
\scopepanel{3}{Sparse-path adaptation on Full}
\rowcolor{ScopeTint}
SP-5M & $1.5509\,\pm\,0.7721$ & 98.4491 \\
\rowcolor{ScopeTint}
SP-10M & $1.5503\,\pm\,0.7719$ & 98.4497 \\
\rowcolor{ScopeTint}
SP-15M & $1.5498\,\pm\,0.7717$ & 98.4502 \\
\bottomrule
\end{tabular}
\tabnote{The physical threshold is 7.5 and pixel accuracy equals $100-\mathrm{BER}$. BER is calculated from thresholded predictions, not inferred from continuous relative error. SCOPE and auxiliary operators retain their respective observation and evaluation protocols. Every SP head is attached to Full.}
\endgroup
\end{table}

\clearpage
\subsection{Paired per-record differences}

For a comparison $A$ versus $B$, let $d_i=e_i(A)-e_i(B)$ and $\Delta=\overline d$. Negative $\Delta$ favors $A$. The tables give the paired SD $s_d$, descriptive statistic $t=\Delta/(s_d/\sqrt n)$, and improved-record fraction $100\,\#\{i:d_i<0\}/n$. Pairing uses the same held-out records and observation masks for fixed trained models. Displayed means and paired statistics are independently rounded. These are record-level comparisons rather than estimates of training-seed variation. All SP comparisons use Full-based heads.

The forward table includes Poisson, Helmholtz, Darcy, and periodic NS. The primary inverse relative-error table includes Poisson, Helmholtz, and periodic NS. Cylinder paired statistics and Darcy inverse BER paired statistics are not reported. Table~\ref{tab:app-paired-darcy-l2} separately preserves the Darcy continuous-error diagnostic; it is not a BER test.

\begingroup
\color{black}
\small
\setlength{\tabcolsep}{5pt}
\renewcommand{\arraystretch}{1.10}
\begin{longtable}{@{}lrrrrr@{}}
\caption{\textbf{Forward $u$: paired differences at uniform/500.} $\Delta$ and paired SD are relative-$L_2$ percentage points; $\Delta<0$ favors the first method named in each group. Statistics are computed on paired records for the fitted models.}
\label{tab:app-paired}\\
\toprule
\rowcolor{ScopeTint}
PDE & $n$ & $\Delta$ (pp) & Paired SD & $t$ & First better (\%) \\
\midrule
\endfirsthead
\multicolumn{6}{l}{\tablename\ \thetable\ (continued)}\\
\toprule
\rowcolor{ScopeTint}
PDE & $n$ & $\Delta$ (pp) & Paired SD & $t$ & First better (\%) \\
\midrule
\endhead
\midrule
\multicolumn{6}{r}{Continued on next page}\\
\endfoot
\bottomrule
\endlastfoot
\scopepanel{6}{SP-5M vs.\ Full}
Poisson & 1,024 & $-0.0126$ & 0.0473 & $-8.5$ & 59.9 \\
Helmholtz & 10,000 & $-0.0077$ & 0.0278 & $-27.6$ & 61.9 \\
Darcy & 10,000 & $-0.0051$ & 0.0212 & $-23.9$ & 60.1 \\
NS (periodic) & 1,000 & $-0.0080$ & 0.0143 & $-17.6$ & 71.8 \\
\scopepanel{6}{SP-10M vs.\ Full}
Poisson & 1,024 & $-0.0130$ & 0.0486 & $-8.6$ & 60.0 \\
Helmholtz & 10,000 & $-0.0083$ & 0.0269 & $-30.8$ & 63.0 \\
Darcy & 10,000 & $-0.0053$ & 0.0222 & $-23.9$ & 60.3 \\
NS (periodic) & 1,000 & $-0.0083$ & 0.0146 & $-18.0$ & 72.2 \\
\scopepanel{6}{SP-15M vs.\ Full}
Poisson & 1,024 & $-0.0133$ & 0.0483 & $-8.8$ & 61.3 \\
Helmholtz & 10,000 & $-0.0084$ & 0.0266 & $-31.6$ & 63.4 \\
Darcy & 10,000 & $-0.0054$ & 0.0229 & $-23.5$ & 60.2 \\
NS (periodic) & 1,000 & $-0.0080$ & 0.0145 & $-17.5$ & 72.0 \\
\scopepanel{6}{Full vs.\ FO}
Poisson & 1,024 & $-0.1519$ & 0.3264 & $-14.9$ & 70.6 \\
Helmholtz & 10,000 & $-0.1945$ & 0.3584 & $-54.3$ & 74.6 \\
Darcy & 10,000 & $+0.0113$ & 0.1323 & $8.5$ & 46.1 \\
NS (periodic) & 1,000 & $-0.0457$ & 0.1159 & $-12.5$ & 65.1 \\
\scopepanel{6}{Full vs.\ FJV}
Poisson & 1,024 & $-0.3844$ & 0.4730 & $-26.0$ & 82.8 \\
Helmholtz & 10,000 & $-0.4896$ & 0.5110 & $-95.8$ & 88.1 \\
Darcy & 10,000 & $-0.0050$ & 0.1372 & $-3.7$ & 52.1 \\
NS (periodic) & 1,000 & $-0.1202$ & 0.1351 & $-28.2$ & 84.3 \\
\end{longtable}
\endgroup

\clearpage

\begingroup
\color{black}
\small
\setlength{\tabcolsep}{5pt}
\renewcommand{\arraystretch}{1.10}
\begin{longtable}{@{}lrrrrr@{}}
\caption{\textbf{Inverse $a$: paired differences at uniform/500.} $\Delta$ and paired SD are relative-$L_2$ percentage points; $\Delta<0$ favors the first method named in each group. Statistics are computed on paired records for the fitted models. Darcy is omitted here because its primary inverse metric is BER, for which paired summaries are unavailable.}
\label{tab:app-paired-inverse}\\
\toprule
\rowcolor{ScopeTint}
PDE & $n$ & $\Delta$ (pp) & Paired SD & $t$ & First better (\%) \\
\midrule
\endfirsthead
\multicolumn{6}{l}{\tablename\ \thetable\ (continued)}\\
\toprule
\rowcolor{ScopeTint}
PDE & $n$ & $\Delta$ (pp) & Paired SD & $t$ & First better (\%) \\
\midrule
\endhead
\midrule
\multicolumn{6}{r}{Continued on next page}\\
\endfoot
\bottomrule
\endlastfoot
\scopepanel{6}{SP-5M vs.\ Full}
Poisson & 1,024 & $-0.0747$ & 0.0306 & $-78.0$ & 99.6 \\
Helmholtz & 10,000 & $-0.0587$ & 0.0331 & $-177.5$ & 97.3 \\
NS (periodic) & 1,000 & $-0.0330$ & 0.0143 & $-73.0$ & 98.8 \\
\scopepanel{6}{SP-10M vs.\ Full}
Poisson & 1,024 & $-0.0764$ & 0.0326 & $-75.1$ & 99.3 \\
Helmholtz & 10,000 & $-0.0597$ & 0.0334 & $-179.0$ & 97.4 \\
NS (periodic) & 1,000 & $-0.0348$ & 0.0149 & $-74.0$ & 99.0 \\
\scopepanel{6}{SP-15M vs.\ Full}
Poisson & 1,024 & $-0.0775$ & 0.0327 & $-75.8$ & 99.3 \\
Helmholtz & 10,000 & $-0.0605$ & 0.0336 & $-180.1$ & 97.6 \\
NS (periodic) & 1,000 & $-0.0349$ & 0.0152 & $-72.7$ & 98.9 \\
\scopepanel{6}{Full vs.\ FO}
Poisson & 1,024 & $-1.7438$ & 0.5903 & $-94.5$ & 100.0 \\
Helmholtz & 10,000 & $-2.0047$ & 0.6197 & $-323.5$ & 100.0 \\
NS (periodic) & 1,000 & $-0.3555$ & 0.1669 & $-67.4$ & 98.9 \\
\scopepanel{6}{Full vs.\ FJV}
Poisson & 1,024 & $-1.7309$ & 0.6187 & $-89.5$ & 100.0 \\
Helmholtz & 10,000 & $-2.9453$ & 0.7408 & $-397.6$ & 100.0 \\
NS (periodic) & 1,000 & $-0.6208$ & 0.2038 & $-96.3$ & 100.0 \\
\end{longtable}
\endgroup

\begin{table}[!htbp]
\centering
\caption{\textbf{Auxiliary Darcy inverse continuous-error diagnostic (not BER).} These retained paired relative-$L_2$ statistics are not the classification endpoint used in the main comparisons. Both $\Delta$ and paired SD are relative-$L_2$ percentage points. No BER significance or improved-record rate is inferred from them.}
\label{tab:app-paired-darcy-l2}
\begingroup
\color{black}
\small
\setlength{\tabcolsep}{5pt}
\renewcommand{\arraystretch}{1.12}
\begin{tabular}{@{}lrrrrr@{}}
\toprule
\rowcolor{ScopeTint}
Comparison & $n$ & $\Delta$ (pp) & Paired SD & $t$ & First better (\%) \\
\midrule
SP-5M vs.\ Full & 10,000 & $-0.0311$ & 0.0417 & $-74.6$ & 77.8 \\
SP-10M vs.\ Full & 10,000 & $-0.0331$ & 0.0443 & $-74.7$ & 77.8 \\
SP-15M vs.\ Full & 10,000 & $-0.0346$ & 0.0461 & $-75.0$ & 78.2 \\
Full vs.\ FO & 10,000 & $-0.0719$ & 0.3327 & $-21.6$ & 58.6 \\
Full vs.\ FJV & 10,000 & $-0.3106$ & 0.3270 & $-95.0$ & 83.7 \\
\bottomrule
\end{tabular}
\endgroup
\end{table}

\clearpage
\section{Operator Configurations and Additional Supervision}
\label{app:controls}

\subsection{Main-comparison mask-aware operators}
\label{app:operator-controls}

Each operator takes six channels, $[M_a\widetilde a,M_a,M_u\widetilde u,M_u,x,y]$, and outputs dense $a,u$. Explicit masks distinguish missing values from zeros. All four families use field loss only, with no teacher, latent loss, grounding, variance term, or pretraining. They share the five-family homogeneous-batch schedule and task mixture described in Appendix~\ref{app:masks}. Training uses 100 epochs, batch 125, five-epoch warmup, AdamW peak learning rate $1.25\times10^{-4}$, and weight decay 0.05. Forward uses BF16 and physical field scoring uses float32. Model sizes and training exposure are listed separately from SCOPE in Table~\ref{tab:training-campaigns}.

\begin{table}[t]
\centering
\caption{\textbf{Main-comparison operator configurations.} These are the SCOPE-protocol models in Table~\ref{tab:main}, not the auxiliary models in Figure~\ref{fig:hidden-field-current}. Counts are real trainable degrees of freedom, including both parts of Fourier weights.}
\label{tab:operators}
\begingroup
\color{black}
\small
\renewcommand{\arraystretch}{1.16}
\begin{tabular}{@{}l r L{.59\linewidth}@{}}
\toprule
\rowcolor{ScopeTint}
Model & Parameters & Main configuration \\
\midrule
FNO & 5,029,946 & Width 40; 14 Fourier modes; 4 layers; projection width 128; padding 9. \\
DeepONet & 4,977,354 & CNN branch channels 32/64/128/192; branch hidden width 360; 256 basis functions; trunk width 128, depth 2. \\
CNO & 4,774,776 & 3 levels; 2 residual blocks; 4 bottleneck blocks; channel multiplier 52. \\
Transolver & 4,792,482 & 8 layers; width 160; 8 heads; 32 slices; MLP ratio 2. \\
\bottomrule
\end{tabular}
\endgroup
\end{table}

The main operators use the same observation families, budgets, task mixture, and physical field loss as the SCOPE backbone. The auxiliary operator pairs below instead use a distinct training objective and acquisition law.

\subsection{Auxiliary reconstruction supervision for FNO and DeepONet}
\label{app:auxiliary-operators}

\paragraph{Inputs and observations.}
The FNO/DeepONet models with and without $G$ in Figure~\ref{fig:hidden-field-current} and Tables~\ref{tab:app-forward-u}--\ref{tab:app-darcy-ber} are trained independently from the main-table operators. One head predicts the hidden field; batches alternate between observing $a$ to predict $u$ and observing $u$ to predict $a$. Zero-filled fields and explicit masks specify the input, and the target channel is fully hidden. Independent Bernoulli masks with $p=0.03$ give an expected 491.52 observations and standard deviation of approximately 21.8 on a $128\times128$ grid. Masks refresh in groups of five epochs. This differs from SCOPE's exact 500-point subsets and mixed-layout training schedule.

\paragraph{Training.}
These models train for 100 epochs from scratch, without pretraining or a teacher, using batch 32 and seed 0. AdamW uses constant learning rate $10^{-4}$, weight decay $10^{-5}$, and gradient clipping at norm 1, without warmup or cosine decay. Computation is float32 with TF32 disabled. Per-channel normalization statistics are fitted on the first 49,000 training records, or 14,000 for cylinder flow. The objective is mean $L_1$ on the single hidden field in these channel-normalized units. The $+G$ arm adds $0.25$ times the same reconstruction objective with complete observation of the input channel. This auxiliary grounding differs from SCOPE's complete-pair online representation branch and two-channel relative-error loss. Evaluation uses the terminal checkpoint without validation- or test-based checkpoint selection.

\begin{table}[t]
\centering
\caption{\textbf{Auxiliary operator parameter accounting.} These architectures are not parameter-matched and differ from Table~\ref{tab:operators}. Each complex Fourier parameter contributes two real degrees of freedom.}
\label{tab:aux-operators}
\begingroup
\color{black}
\small
\renewcommand{\arraystretch}{1.16}
\begin{tabular}{@{}lrr@{}}
\toprule
\rowcolor{ScopeTint}
Architecture & Parameter elements & Real degrees of freedom \\
\midrule
FNO / PINO & 4,803,521 & 9,522,113 \\
DeepONet & 13,239,987 & 13,239,987 \\
U-Net & 4,740,101 & 4,740,101 \\
CNO & 5,281,721 & 5,281,721 \\
\bottomrule
\end{tabular}
\tabnote{Figure~\ref{fig:hidden-field-current} uses FNO and DeepONet, each with and without grounding. U-Net, CNO, and PINO belong to the wider auxiliary study; their configurations do not supply additional results in the main comparison.}
\endgroup
\end{table}

\paragraph{Evaluation and units.}
The evaluator uses the first 1,000 records of the corresponding released held-out data, five independent mask draws per record, batch 16, and mask-seed base 12345. Within each repeat, forward and inverse share the same pixel pattern on opposite channels. The resulting 5,000 record--mask pairs are not 5,000 independent physical samples. SCOPE instead uses the test populations in Table~\ref{tab:data} with one fixed mask realization. Error bars describe dispersion over each experiment's scored rows, not comparable confidence intervals or training-seed variation.

The auxiliary relative errors are initially computed in the released stored representation. Relative $L_2$ is invariant to common scaling but not translation, so a summary mean cannot generally be converted by one global factor. Darcy errors are converted to physical units per record, and BER is measured separately using the corresponding threshold. For Poisson and Helmholtz, zero source means make inverse relative errors invariant to the stored-to-physical scaling, but nonzero response means prevent exact equality between stored-coordinate and physical forward relative errors. Periodic NS has zero channel means, so both relative errors are scaling-invariant. Cylinder auxiliary errors remain in stored coordinates and omit the known-disk conditioning used by SCOPE; they are retained only as labeled references. The auxiliary norm guard flags target norms at most $10^{-8}$ and excludes no stored-coordinate evaluation records.

\paragraph{Other auxiliary arms.}
The wider study includes CNO/U-Net on Poisson and periodic NS, and PINO on Poisson and Helmholtz, each PINO task with and without $G$ and physics weight 1.0 or 0.1. PINO uses a five-point Laplacian with $h=1/(S-1)$ and residual $\Delta u-a$ for Poisson or $\Delta u+u-a$ for Helmholtz, with dataset-refitted normalization constants. The complementary field not predicted by the single-output head is supplied from the complete training target pair when forming the residual. This is an additional training-information assumption, not access to complete test fields. No Darcy or NS PINO arm is included in this auxiliary study.

\paragraph{Interpretation of additional reconstruction.}
The $+G$ pairs evaluate full-input reconstruction supervision within each auxiliary operator family. Their mean errors improve on some endpoints and worsen on others, as shown in Tables~\ref{tab:app-forward-u}--\ref{tab:app-darcy-ber}. The Full--FJV comparison separately evaluates grounding within SCOPE's joint predictive recipe. These comparisons concern their respective training configurations; the operator $+G$ objective is not identical to SCOPE's latent-space grounding branch.
\clearpage

\section{Reproducibility and Reporting Conventions}
\label{app:reproducibility}

\paragraph{Experiment identity.}
Each result is identified by its PDE, backbone recipe, terminal checkpoint, decoder, observation condition, test split, and metric. Full, FO, and FJV use main-stage epoch-500 checkpoints with their native decoders. Every reported DP or SP experiment uses a fixed Full epoch-500 backbone and its own decoder-stage epoch-100 checkpoint. Complete-view reconstruction, sparse held-out recovery, and training losses are separate quantities.

\paragraph{Comparison groups.}
The main-table mask-aware operators and auxiliary FNO/DeepONet pairs are separately trained models. Table~\ref{tab:training-campaigns} and Appendix~\ref{app:controls} specify their inputs, objectives, sampling, architectures, and optimization. Literature values retain the source protocols identified in Appendix~\ref{app:baseline}.

\paragraph{Statistical conventions.}
Means and standard deviations summarize the scored records under the stated condition. Paired statistics use matched records for fixed models. SCOPE uses its reported training seed and fixed test masks. Rankings use unrounded values; means and paired differences may round independently. Darcy classification and continuous-error statistics remain distinct. The zero-target convention is specified in Appendix~\ref{app:zero}.

\paragraph{Availability.}
The study uses existing DiffusionPDE paired-field datasets~\citep{huang2024diffusionpde,yao2025fundps}. Public implementation, environment setup, and dataset-download instructions are available at \url{https://github.com/ru1ch3n/SCOPE}. The repository documents pretrained-model availability, checksums, and inference-loading conventions. Dataset files remain with their original provider under the upstream license.

\end{document}